\newtheorem{Definition}{Definition}
\newtheorem{Theorem}{Theorem}
\newtheorem{Lemma}{Lemma}
\newtheorem{Corollary}{Corollary}
\newcommand{\BO}[1]{O\left(#1\right)}
\newcommand{\BOM}[1]{\Omega\left(#1\right)}
\newcommand{\BT}[1]{\Theta\left(#1\right)}
\newcommand{\TTL}{\mbox{\rm TTL}}
\newcommand{\distn}{\mbox{\rm dist}}
\newcommand{\mindist}{d_{\rm min}}
\newcommand{\maxdist}{d_{\rm max}}
\newcommand{\gon}{\textsc{gon}\xspace}
\renewcommand{\epsilon}{\varepsilon}
\newcommand{\charikar}{\textsc{charikar}\xspace}
\newcommand{\sampledCharikar}{\textsc{samp-charikar}}
\newcommand{\coresetOutliers}{\textsc{our-sliding}}
\newcommand{\coresetEf}{\textsc{eff-sliding}}
\newcommand{\seqEf}{\textsc{eff-sequential}}
\renewcommand{\arraystretch}{1.1}
\DeclareMathOperator*{\argmax}{argmax} %
\DeclareMathOperator*{\argmin}{argmin} %
\title{$k$-Center Clustering with Outliers in Sliding Windows}
\author{Paolo Pellizzoni\\
University of Padova\\
 Padova,  Italy\\
\texttt{paolo.pellizzoni@studenti.unipd.it}\\
\And
Andrea Pietracaprina\\
University of Padova\\
 Padova,  Italy\\
\texttt{andrea.pietracaprina@unipd.it}\\
\And
Geppino Pucci\\
University of Padova\\
 Padova,  Italy\\
\texttt{geppino.pucci@unipd.it}\\
}
\begin{document}
\maketitle
\begin{abstract}
Metric $k$-center clustering is a fundamental unsupervised learning
primitive. Although widely used, this primitive is heavily affected by
noise in the data, so that a more sensible variant seeks for the best
solution that disregards a given number $z$ of points of the dataset,
called outliers. We provide efficient algorithms for
this important variant in the streaming model under the sliding window
setting, where, at each time step, the dataset to be clustered is the
window $W$ of the most recent data items. Our algorithms achieve
$\BO{1}$ approximation 
and, remarkably, require a working memory linear in $k+z$ and
only logarithmic in $|W|$. 
As a by-product, we show how to estimate the effective diameter
of the window $W$, which is a measure of the spread of the window
points, disregarding a given fraction of noisy distances. We also
provide experimental evidence of the practical viability of our
theoretical results.
\end{abstract}

\keywords{
k-center with outliers, 
effective diameter,
big data,
data stream model, 
sliding windows,
coreset,
doubling dimension, 
approximation algorithms}

\
\section{Introduction}
In a number of modern scenarios (e.g., social network analysis, online
finance, online transaction processing, etc.), data are produced as a
continuous stream, and at such a high rate that on-the-fly processing
can only afford to maintain a small portion of the data in memory,
together with a limited amount of working space.  This computational
scenario is captured by the well-established \emph{streaming model}
\cite{HenzingerRR98}. The \emph{sliding window setting}
\cite{DatarM16, Braverman16}, introduces the additional, desirable
constraint that the input for the problem of interest consists of the
window $W$ of the most recent data items, while older data are
considered ``stale'' and disregarded by the computation.

The $k$-center clustering problem (\emph{$k$-center}, for short) is a
fundamental unsupervised learning primitive with ubiquitous
applications \cite{Snyder11,HennigMMR15,BateniEFM21}.  Given a set $W$ of
points from a metric space, the $k$-center problem requires
determining a subset $C \subset W$ of $k$ centers which minimize the
maximum distance of any point of $W$ from its closest center. However,
since the objective function involves a maximum, the solution is at
risk of being severely influenced by a few ``distant'' points,
called \emph{outliers}.  In fact, the presence of outliers is inherent
in many large datasets, since these points are often artifacts of data
collection, either representing noisy measurements or simply erroneous
information.  To cope with this limitation, $k$-center admits a
heavily studied robust formulation that takes into account
outliers \cite{Charikar2001,MalkomesKCWM15,CeccarelloPP19}: when
computing the objective function for a set of $k$ centers, the $z$
largest distances from the centers are to be discarded, where $z<|W|$
is an additional input parameter representing the tolerable level of
noise. This formulation is known as the \emph{$k$-center problem with
$z$ outliers}.

The decision versions of the $k$-center problem and its variant with $z$ outliers are NP-complete  \cite{GONZALEZ1985293}, 
hence only approximate solutions may be returned within reasonable time bounds. 
In this paper, we present approximation algorithms for the $k$-center
problem with $z$ outliers in the sliding window setting. Moreover, as a by product, we also derive
an algorithm to estimate the $\alpha$-effective diameter of the
window, which is an interesting measure of the spread of a noisy
dataset.

\subsection{Related Work}
In the sequential setting, the $k$-center problem (without
outliers) admits simple 2-approximation
algorithms \cite{GONZALEZ1985293,HochbaumS85}. Recently, a sequential, randomized,  fully-dynamic (i.e., admitting arbitrary insertions and deletions of points) 
$(2+\epsilon)$-approximation algorithm for the problem was presented in \cite{chanFully}.  This algorithm
features update times linear in $k$ and $1/\epsilon$, and polylogarithmic in the  \emph{aspect ratio} $\Delta = \maxdist/\mindist$ of the pointset, that is, 
the ratio between the maximum distance $\maxdist$ and minumum distance $\mindist$ of any two points in the set.  For the $k$-center
problem with $z$ outliers, a simple, fully
combinatorial 3-approximation algorithm running in
$O(k|W|^2\log(|W|))$ time was devised in \cite{Charikar2001}.  In \cite{MalkomesKCWM15,CeccarelloPP19} an
 adaptation of this latter algorithm was proposed for weighted
pointsets (where $z$ represents the aggregate weight of the
outliers). Recently, sequential $2$-approximation algorithms  were developed in
\cite{HarrisPST19,ChakrabartyGK20}, which, however, are based
on a more complex LP-based approach, less amenable to a practical
implementation. A bi-criteria randomized algorithm, only
suitable for (small) constant $k$, which returns a 2-approximation
as long as a slightly larger number of outliers is excluded from the objective function,
was presented in \cite{DingYW19}.

For the classical streaming setting, which seeks, at any step, a solution for
the \emph{full} stream seen so far, a $(4 + \epsilon)$- approximation algorithm
for the $k$-center problem with $z$ outliers was given
in \cite{McCutchenStreaming}. The approximation was later improved to
$3+\epsilon$ in \cite{CeccarelloPP19}. While the former algorithm requires
$O(kz/\epsilon)$ working memory space, the latter requires space
$O((k+z)(c/\epsilon)^D)$, where $c$ is a constant and 
$D$ is the \emph{doubling dimension} of the input stream
(a generalization of the notion of
Euclidean dimension to arbitrary metrics, formally defined in
Section~\ref{sec:doublingdim}).

\sloppy For the stricter sliding window setting, in \cite{Cohen} the
authors devised an algorithm able to compute a $(6 +
\epsilon)$-approximation to the $k$-center problem for the current
window, while keeping $O(k\log(\Delta)/\epsilon)$ points stored in
the working memory.  Based on the same techniques, the authors also
developed a $(3+\epsilon)$-approximation algorithm for the diameter of
the current window. In \cite{PellizzoniPP20}, we presented a
$(2+\epsilon)$-approximation algorithm for the $k$-center problem in
the sliding window setting, where the improved approximation is
obtained at the expense of a blow-up of a factor $O((c/\epsilon)^D)$
in the working memory, with $c$ constant.

For what concerns the $k$-center problem with $z$ outliers in the sliding
window setting, the only known algorithm was devised very recently in
\cite{DeBergMZ21}. At every time step, the algorithm maintains an
$\epsilon$-coreset for the problem on the
current window, namely, a subset of the window points, such that, if
used as input of any $c$-approximation sequential algorithm for
$k$-center with $z$ outliers, yields a $(c+\epsilon)$-approximate
solution to the problem on the entire window. The algorithm requires
the knowledge of the aspect ratio of the stream,  and uses
$\BO{\log (\Delta) kz(1/\epsilon)^D}$ working
memory. Moreover, since it relies on the execution of a sequential
algorithm for $k$-center with $z$ outliers on the coreset every time
a new point is added, it has an $\BO{\log (\Delta)
  \left(kz(1/\epsilon)^D\right)^3}$ update time per point, which makes
it very impractical for streams with high arrival rates, and values of
$k$ and $z$ not too small. In \cite{DeBergMZ21}, the authors also
prove that any sliding-window algorithm for $k$-center with $z$ outliers, which
features a $(1+\epsilon)$ approximation ratio, must use $\BOM{\log (\Delta) kz/\epsilon}$ working memory. 

In \cite{BravermanLLM16,BorassiELVZ20}, sliding window algorithms have
been proposed for the $k$-median and $k$-means clustering problems,
whose objective is to minimize the average distance and squared
distance of all window points from the closest centers, respectively.
For distributed solutions to the $k$-center problem (with and without
outliers) targeting volume rather than velocity of the data, see
\cite{CeccarelloPP19,BateniEFM21} and references therein.

The notion of $\alpha$-effective diameter was introduced in
\cite{PalmerGF02} in the context of graph analytics to characterize
the growth rate of the neighborhood function, but it naturally extends
to general metrics, providing a robust substitute of the diameter in
the presence of noise. For $\alpha \in (0,1)$, the $\alpha$-effective 
diameter of a metric dataset $W$ is the
minimum threshold such that the distances of at least $\alpha |W|^2$
pairs of window points fall below the threshold.

\subsection{Our Contribution} 
We present approximation algorithms for $k$-center with $z$ outliers
in the sliding window setting, which feature constant approximation
ratios and small working memory requirements and update times. As
customary for clustering in the big data realm, our algorithms hinge
on the maintenance of a \emph{coreset}, that is, a small subset of
representative points of the current window, from which an accurate
solution can be extracted. Crucial to the effectiveness of our
approach, is the introduction of \emph{weights} for coreset points,
where the weight of a point is (an estimate of) the
number of window points it represents. To efficiently maintain the
weights, we employ a succinct data structure inspired by the smooth
histograms of \cite{Braverman07}, which enable considerable space
savings if some slackness in the account of outliers is permitted.
 As an interesting
by-product, we also devise an algorithm that employs our coresets to approximate
the $\alpha$-effective diameter of the current window $W$.

The analysis of our algorithms is carried out as a function of two
design parameters $\delta$ and $\lambda$, which control, respectively,
the level of accuracy and the slackness in the account of outliers,
and as a function of a number of characteristics of the stream $S$,
namely, the values $\mindist$ and $\maxdist$, representing the minimum
and the maximum distance of two distinct points of $S$, and the
doubling dimension $D$ of $S$.

Our main results are listed below.

\begin{itemize}
\item 
A sliding window algorithm which, at any time, is able to return a set
of centers covering all but at most $z(1+\lambda)$ points of the
current window $W$, within a radius which is an $\BO{1}$ factor larger
than the optimal radius for $z$ ouliers. The algorithm requires a
working memory of size $\BO{\log (\maxdist/\mindist) (k+z)
  \log_{1+\lambda}(|W|)}$ and processes each point in time linear in
the working memory size.  By setting $\lambda = 1/(2z)$, the number of
uncovered points becomes at most $z$.
\item 
An improved algorithm with the same coverage guarantee as above, featuring a radius which is only a factor
$(3+O(\delta))$ larger than the optimal radius, at the expense
of an extra $O((c/\delta)^D)$ factor in both the working memory size and update time, for a
suitable constant $c$. 
\item
A sliding-window algorithm that, starting from a (possibly crude)
lower bound on the ratio between the $\alpha$-effective and the full
diameter of the window $W$, returns upper and lower upper bounds to
the $\alpha$-effective diameter of $W$. The algorithm features accuracy-space tradeoffs 
akin to those of the improved algorithm for $1$-center
with $z=0$ outliers.
\item
Experimental evidence that our algorithms feature good performance and provide accurate solutions. 
\end{itemize}

It is important to remark that our algorithms are \emph{fully
  oblivious} to the metric parameters $\mindist$, $\maxdist$, and $D$,
in the sense that the actual values of these parameters only influence
the analysis but are not needed for the algorithms to run.  This is a
very desirable feature, since, in practice, these values are difficult
to estimate.

Compared to the algorithm of \cite{DeBergMZ21} for $k$-center with $z$
outliers, our algorithms feature a considerably lower update time,
which makes them  practically viable.  Moreover, in the case of noisy streams for
which $z = \BOM{\log |W|}$, our algorithms require considerably less
working memory, as long as some slackness in the number of outliers can be 
tolerated. Finally, while the algorithm of \cite{DeBergMZ21} requires
the knowledge of $\mindist$ and $\maxdist$, our algorithms are oblivious to these
values. 

\subsection{Organization of the Paper}
The rest of the paper is structured as follows.
Section~\ref{sec:preliminaries} provides 
preliminary definitions. Sections~\ref{sec:algorithm} and
\ref{sec:effdiameter} present, respectively, the
algorithms for the k-center problem with $z$ outliers and for the
effective diameter. Section~\ref{sec:experiments} reports on the
experimental results. Section~\ref{sec:conclusions} concludes the paper
with some final remarks and pointers to relevant open problems.

\section{Preliminaries} \label{sec:preliminaries}
Consider a (possibly unbounded) stream $S$ of points from some metric
space with distance function $\distn(\cdot, \cdot)$. At any time $t$, let $W$ denote
the set of the last $N=|W|$ points arrived, for a fixed window length $N$. 
In the sliding window model, for a given computational problem, we aim at developing algorithms which, at any time $t$, are able to solve the instance represented
by the current window $W$, using working memory considerably smaller than $N$ (possibly, constant or logarithmic in $N$).

\subsection{Definition of the Problems} \label{sec:definitions}
For any  point $p \in W$ and any subset  $C \subseteq W$, 
we use the notation $\distn(p,C) = \min_{q \in C} \distn(p,q)$,
and define the \emph{radius of $C$ with respect to $W$} as
\[
r_C(W) = \max_{p \in W} \distn(p,C).
\] 
For a positive integer $k<|W|$, the \emph{$k$-center problem} requires
to find a subset $C \subseteq W$ of $k$ centers which minimizes
$r_C(W)$.  For a given $W$ and $k$, we denote the radius of the
optimal solution of this problem by $r_{k}^*(W)$. Given any radius
value $r$, a subset $C \subseteq W$ with $r_C(W) \leq 2r$, can be
incrementally built using the \emph{greedy strategy} of
\cite{HochbaumS85}: starting from an arbitrary center, a new center,
selected among the points of $W$ at distance $>2r$ from the current
centers, is iteratively added to $C$ until all points of $W$ are at
distance at most $2r$ from $C$. An easy argument shows that if $r \geq
r_{k}^*(W)$, the set $C$ obtained in this fashion has size at most
$k$. By combining this strategy with a suitable guessing protocol, a
2-approximate solution to the $k$-center problem for $W$ is obtained.

Note that any subset $C \subseteq W$
induces a partition of $W$ into $|C|$ clusters, by assigning each
point to its closest center (with ties broken arbitrarily).

In this paper, we focus on the following important extension to the $k$-center problem. For positive $k,z < |W|$, the
\emph{$k$-center problem with $z$ outliers} requires to find a subset $C
\subseteq W$ of size $k$ minimizing $r_C(W-Z_C)$, where $Z_C$ is
the set of $z$ points in $W$ with the largest distances from $C$, which are
regarded as outliers to be discarded from the clustering. We denote the 
radius of the optimal
solution of this problem by $r_{k, z}^*(W)$.  Observe that the
$k$-center problem with $z$ outliers reduces to the $k$-center problem for
$z=0$. Also, it is straightforward to argue that the optimal solution
of the $k$-center problem (without outliers) with $k+z$ centers has a
radius not larger than the optimal solution of the problem
with $k$ centers and $z$ outliers, that is,
\begin{equation}
r_{k+z}^*(W) \le r_{k,z}^*(W).
\label{eq:radius-relation}
\end{equation}

In a more general formulation of the $k$-center problem with $z$ outliers,
each point $p \in W$ carries a positive integer weight $w(p)$, and the 
desired set $C$ of $k$ centers must minimize $r_C(W-Z_C)$, where $Z_C$ is
the set of points with the largest distances from $C$, of maximum cardinality
and aggregate weight at most $z$. We will refer to this weighted formulation as
\emph{weighted $k$-center with $z$ outliers}.

The algorithms presented in this paper for $k$-center with
$z$ outliers crucially rely on the extraction of a succinct
\emph{coreset} $T$ from the (possibly large) input $W$, so that a
solution to the problem can be efficiently computed by running  a 
sequential algorithm on $T$ rather than on $W$. The quality of a
coreset $T$ is captured by the following definition.

\begin{Definition} \label{def:coreset}
Given a pointset $W$ and a value $\epsilon >0$, a subset $T \subseteq
W$ is an \emph{$\epsilon$-coreset for $W$} w.r.t. the $k$-center
problem with $z$ outliers 
if $\max_{p \in W} \distn(p, T) \leq \epsilon r^*_{k,z}(W)$.
\end{Definition}

An $\epsilon$-coreset $T$ of $W$ ensures
that each point in $W$ is ``represented'' by a close enough point in
$T$, where closeness is defined w.r.t.\ $\epsilon$ and
$r^*_{k,z}(W)$.  In fact, our algorithms will make use of
\emph{weighted coresets}, where, additionally, each coreset point $p
\in T$ features a weight which is (an approximation of) the number of
points of $W$ represented by $p$.

An important characteristic of a pointset $W$ is its \emph{diameter}, defined as $\Delta_W =
\max_{p,q \in W} \distn(p,q)$, which can be computed exactly in
quadratic time. 
The diameter is very sensitive to noise in the dataset and, in the
presence of outliers, its value might turn out to be scarcely
representative of most pairwise distances in $W$. Thus, the more robust
notion of \emph{effective diameter} has been introduced in
\cite{PalmerGF02}. Let $d_{1,W}, d_{2,W}, \ldots$ be an
enumeration of the $|W|^2$ distances between all pairs of points of
$W$, in non-decreasing order. For a given parameter $\alpha \in
(0,1)$, the \emph{$\alpha$-effective diameter} of $W$ is defined as
$\Delta^{\alpha}_W = d_{\lceil \alpha |W|^2\rceil,W}$, namely,
the smallest value such that at least $\alpha |W|^2$ pairs of points in $W$ are
within distance $\Delta^{\alpha}_W$.

\subsection{Doubling Dimension} \label{sec:doublingdim}
The analysis of our algorithms will be carried out as a function of
a number of relevant parameters, including  the dimensionality of the data. To deal with arbitrary metric spaces, we resort to the
following, well-established general notion of dimensionality. For any $x \in W$ and $r
>0$, the \emph{ball of radius $r$ centered at $x$}, denoted as
$B(x,r)$, is the subset of all points of $W$ at distance at most $r$
from $x$. The \emph{doubling dimension} of $W$ is the minimum value
$D$ such that, for all $x \in W$, any ball $B(x,r)$ is contained in
the union of at most $2^D$ balls of radius $r/2$ centered at points of
$W$. The notion of doubling dimension has been used extensively in previous
works (see \cite{GottliebKK14,PellizzoniPP20} and references therein).

\section{$k$-Center with $z$ Outliers} \label{sec:algorithm}
Let $S$ be a (possibly unbounded) stream of points from some metric
space, and let $N$ be the selected window length. For any point $p \in
S$, its \emph{Time-To-Live} $\TTL(p)$ is $N$ when $p$ arrives, and it
decreases by 1 at each subsequent step. We say that $p$ is
$\emph{active}$ when $\TTL(p) >0$, and that it $\emph{expires}$ when
$\TTL(p)$ becomes 0. For convenience, the analysis will also consider
expired points with negative TTL's. At any time  $t$, the current window $W$ 
consists of all arrived points with positive TTL, hence $|W|=N$.

In this section, we present coreset-based algorithms which, at any
time $t$, are able to return accurate approximate solutions to the
$k$-center problem with $z$ outliers for $W$.  The section is
structured as follows.  Subsection~\ref{sec:coreset} describes and
analyzes the weighted coreset construction. Subsection~\ref{sec:solution} discusses how to extract the
final solution from the weighted coreset whose radius is at most a
constant factor away from $r_{k,z}^*(W)$, as long as a slightly larger
number of outliers is tolerated. Subsection~\ref{sec:obliviousness} shows how to remove
an assumption made to simplify the presentation. 
Finally,
Subsection~\ref{sec:ddimension} shows that for spaces of bounded
doubling dimension, the approximation factor can be lowered to a mere
$3 + \epsilon$, for any fixed $\epsilon$, at the expense of larger
working memory requirements.

\subsection{Weighted Coreset Construction} \label{sec:coreset}

\subsubsection{Algorithm} The proposed coreset construction hinges upon the
approach by \cite{Cohen} for $k$-center without
outliers, with major extensions introduced to maintain
weights. Let $\mindist$ and $\maxdist$ denote, respectively, the minimum and
maximum distances between any two distinct points of the stream.
For a user-defined constant $\beta \in (0,1]$, let
\[
\Gamma = 
\{(1+\beta)^i : 
\lfloor \log_{1+\beta} \mindist \rfloor 
\leq i \leq
\lceil \log_{1+\beta} \maxdist \rceil \},
\]
The values in $\Gamma$ will be used as guesses of the optimal radius
$r^*_{k+z}(W)$ of a $(k+z)$-center clustering without outliers of the
current window (recall that $r^*_{k+z}(W)$ is a lower bound to the
optimal radius $r^*_{k,z}(W)$ for the problem with $z$ outliers), and
the algorithm will maintain suitable data structures capable of
identifying the right guess.  For ease of presentation, we assume for
now that $\mindist$ and $\maxdist$ are known to the algorithm.  In
Subsection \ref{sec:obliviousness}, we will show how the assumption can
be removed by maintaining estimates of the two values. 
For each guess $\gamma$, the algorithm maintains three sets of active
points, namely
$A_{\gamma}$, $R_{\gamma}$ and $O_{\gamma}$, and the coreset is
extracted from these sets.  $A_{\gamma}$ is a small set of active
points, called \emph{attraction points}, such that, for any
two distinct $a_1,a_2 \in A_{\gamma}$, $\distn(a_1, a_2) > 2\gamma$.

At every time $t$, the arrival of a new point $p$ is
handled as follows, for every guess $\gamma$.
If there exist attraction points $a$ such that $\distn(p, a) \leq
2\gamma$, we define $a_{\gamma}(p)$ as the one with minimum TTL, and
say that \emph{$p$ is attracted by $a_{\gamma}(p)$}. Otherwise, $p$
becomes a new attraction point in $A_{\gamma}$, and we let
$a_{\gamma}(p) = p$ (i.e., $p$ is attracted by itself). Set
$R_{\gamma}$ maintains, for each $a \in A_{\gamma}$,
one \emph{representative} $r_{\gamma}(a)$, defined as the most recent
point attracted by $a$. Note that while $a_{\gamma}(p)$ is fixed at
$p$'s arrival, the representative $r_{\gamma}(a)$ may change with
time. When an attraction point $a$ expires, its representative
$r_{\gamma}(a)$ becomes an \emph{orphan} and is moved to the set
$O_{\gamma}$. 

Since the pairwise distance between points of $A_{\gamma}$ is
$>2\gamma$, if $|A_{\gamma}| \geq k+z+1$, we clearly have that $\gamma
< r^*_{k+z}(W)$, and, as it will be seen below, the points in
$A_{\gamma} \cup R_{\gamma} \cup O_{\gamma}$ will not be used to
extract the coreset. Therefore, to save memory, we set $k+z+1$ as a
threshold for $|A_{\gamma}|$: when $|A_{\gamma}| = k+z+1$ and the
newly arrived point qualifies to be an attraction point, the algorithm
discards the point $a \in A_{\gamma}$ with minimum TTL, and moves its
representative $r_{\gamma}(a)$ to $O_{\gamma}$. As a further space
saving, all points in $O_{\gamma}$ older than $a$ are discarded,
since throughout their residual lifespan $|A_{\gamma}| \geq k+z+1$,
hence they cannot contribute to a valid coreset. 

At any time $t$, the coreset for the $k$-center problem with $z$
outliers, w.r.t.\ the current window $W$, is obtained as $T =
R_{\hat{\gamma}} \cup O_{\hat{\gamma}}$, where $\hat{\gamma}$ is the
smallest guess such that: (i) $|A_{\hat{\gamma}}| \leq k+z$; and (ii)
by running the simple greedy strategy of \cite{HochbaumS85}, reviewed
in Section~\ref{sec:definitions}, a set $C$
of $k+z$ points can be selected from $A_{\hat{\gamma}} \cup
R_{\hat{\gamma}}
\cup O_{\hat{\gamma}}$, such that any other point in this set is at
distance at most $2\hat{\gamma}$ from a selected point.
To ensure that an accurate solution to the $k$-center problem with $z$
outliers can be extracted from the coreset $T$, we need
to weigh each point $p \in T$ with (a suitable accurate estimate of)
the number of window points for which $p$ can act as
a \emph{proxy}. This requires to maintain additional information with
the points of the various sets $R_{\gamma}$ and $O_{\gamma}$, as
explained below.

For each guess $\gamma$ and each active point $p \in W$, we define its
proxy $\pi_{\gamma}(p) \in R_{\gamma} \cup O_{\gamma}$ as the most
recent active point $r$ such that both $p$ and $r$ are attracted by
$a_{\gamma}(p)$.  (Note that $r$ may be an orphan if $a_{\gamma}(p)$
was discarded from $A_\gamma$.) Thus, $\pi_{\gamma}(p) =
r_{\gamma}(a_{\gamma}(p))$. Therefore, at any time $t$, the proxy
function $\pi_{\gamma}(\cdot)$ defines a mapping between active points
and points of $O_{\gamma} \cup R_{\gamma}$, and, for every $r \in
O_{\gamma} \cup R_{\gamma}$ we define its \emph{weight} $w_{\gamma}(r)
= |\{p \in W \; : \; \pi_{\gamma}(p)=r\}|$.  For each $r \in
R_{\gamma} \cup O_{\gamma}$, our algorithm maintains a
\emph{histogram} $L_{r,\gamma} = \{ (t_{r,1}, c_{r,1}), (t_{r,2},
c_{r,2}), \ldots \}$, which is a list of pairs (timestamp, weight)
such that there are $c_{r,j}$ points $p$ assigned to $r$ (i.e., for
which $\pi_{\gamma}(p)=r$) that arrived at or after time
$t_{r,j}$.  When a point $p$ arrives at time $t$, the histograms are
updated as follows\footnote{For the sake of readability, from now on we drop
the subscript $\gamma$ from histograms and weights, when clear from the
context.}.  If $p$ becomes a new attraction point (hence,
$p=a_{\gamma}(p)=r_{\gamma}(p)$), a new histogram $L_p=\{ (t, 1) \}$
is created and is assigned to $p$. If instead $p$ is attracted by some
$a \in A_\gamma$, then $p$ becomes the representative $r_\gamma(a)$
and inherits the histogram from the previous
representative, modified by increasing all weights by 1 and adding the
new pair $(t, 1)$. Also, all histogram entries with timestamp 
$t-|W|$ are discarded, since they refer to the point which expired at time $t$. 
The pairs in each histogram are naturally sorted by
increasing order of timestamps and decreasing order of weight. Observe
that when a representative $r$ becomes an orphan, hence it is moved
from $R_\gamma$ to $O_\gamma$, its histogram $L_r$ does not acquire new entries
until $r$ expires.

At any time $t$, for a histogram $L_r$, we denote by $c_{L_r}$ the
 weight of the pair in $L_r$
with the smallest timestamp (which is greater than or equal to $t-|W|+1$ by virtue of the elimination of the old entries described above)
 It is easy to see that
$c_{L_r} = w(r)$, that is, the number of points $p$ for which $r$ is
the proxy. Unfortunately, keeping the full histogram $L_r$ for each
$r \in R_\gamma \cup O_\gamma$ requires a working memory of size $\BT{|W|}$, which
is
far beyond the space bound targeted by sliding window algorithms.
Therefore, taking inspiration from the smooth histograms
of \cite{Braverman07}, we maintain in $L_r$ only a trimmed version of
the full list, which however ensures that $c_{L_r}$ is an estimate of
$w(r)$ with a controlled level of accuracy. Specifically, let $\lambda
>0$ be a user-defined accuracy parameter. Every time a histogram $L_r$
is updated, a scan of the pairs is performed which implements the
following trimming:
\begin{itemize}
\item
The first pair $(t,c)$ is kept in the histogram.
\item
If a pair $(t,c)$ is kept in the histogram, 
all subsequent pairs $(t',c')$ with $t' > t$ and 
$c \leq (1+\lambda)c'$ are deleted, except for the
last such pair, if any.
\end{itemize}

At any time $t$, the weighted coreset that will be used to solve the
$k$-center problem with $z$ outliers for the current window $W$
consists of the set $T = R_{\hat{\gamma}} \cup O_{\hat{\gamma}}$,
where the guess $\hat{\gamma}$ is computed as described above, and
each $r \in T$ is assigned weight $\tilde{w}(r) = c_{L_r}$.  As it
will be shown in the next subsection, the $\tilde{w}(r)$'s are good
approximations of the true weights $w(r)$'s, and this will provide
good bi-criteria approximation quality for the returned solution.

The pseudocode detailing the algorithm is provided below. 
The arrival of a new point $p$ at time $t$ is handled
by the main Procedure~$\proc{update}(p,t)$ (Algorithm~\ref{alg:update}) 
which, for every guess
$\gamma$, invokes in turn
Procedure~$\proc{insertAttraction}(p,\gamma)$ (Algorithm~\ref{alg:insert}), 
when $p$ must be added
to $A_{\gamma}$, or
Procedure~$\proc{updateHistrograms}(L_{p,\gamma})$ (Algorithm~\ref{alg:histograms}), when $p$ becomes a
new representative of some existing point of $A_{\gamma}$. To extract the weighted coreset, Procedure~$\proc{extractCoreset}()$ (Algorithm~\ref{alg:coreset})
is executed.

\begin{algorithm}[h] 
\small
    \SetAlgoLined
    \ForEach{$\gamma \in \Gamma$}{
        \ForEach{$\mbox{expired } p \in A_{\gamma}$}{
            $A_{\gamma} \gets A_{\gamma} \setminus \{p\}$ \\
            $\mbox{Move } r_{\gamma}(p) \mbox{ from } R_{\gamma} \mbox{ to } O_{\gamma} $ \\
        }
   \ForEach{$r \in O_{\gamma}$}{
   \uIf{$r$ \emph{is} expired}{Remove $r$ (and its histogram) from $O_{\gamma}$} 
   Remove from $L_{r,\gamma}$ the entry (if any) with timestamp $t-|W|$
}
        $x \gets \argmin_{ q \in A_{\gamma} : \distn(p, q) \leq 2\gamma } \TTL(q)$ \\
        \uIf{$x == $ null}{
            $\proc{insertAttraction}(p, \gamma)$ \\
            $L_{p,\gamma} \gets \{ ( t, 1 ) \}$ \\ \label{creation}
        }
        \Else{
            Move the content of $L_{r_{\gamma}(x),\gamma}$ to $L_{p,\gamma}$ \\
            $\proc{updateHistogram}(L_{p,\gamma})$ \\
            Set $r_{\gamma}(x) = p$ in $R_{\gamma}$\\
        } 
       
    }

\caption{$\proc{update}(p, t)$} \label{alg:update}
\end{algorithm}

\begin{algorithm}[h] 
\small
    \SetAlgoLined
    $A_{\gamma} \gets A_{\gamma} \cup \{ p \}$ \\
    $r_{\gamma}(p) \gets p$ \\
    $R_{\gamma} \gets R_{\gamma} \cup \{r_{\gamma}(p)\}$ \\
    \If{$|A_{\gamma}| > k+z+1$}{
        $v_{old} \gets \argmin_{v \in A_{\gamma}} \TTL(v)$ \\
        $A_{\gamma} \gets A_{\gamma} \setminus \{v_{old}\}$ \\
        Move $r_{\gamma}(v_{old})$ from $R_{\gamma}$ to $O_{\gamma}$  \\
    }
    \If{ $|A_{\gamma}| > k+z$}{
        $t_{\rm min} \gets \min_{v \in A_{\gamma}} \TTL(v)$ \\
        Remove from $O_{\gamma}$ all $q$ with $\TTL(q) < t_{\rm min}$ (and their histograms)\\
    }   
\caption{$\proc{insertAttraction}(p, \gamma)$} \label{alg:insert}
\end{algorithm}


\begin{algorithm}[h] 
\small
\DontPrintSemicolon
    \SetAlgoLined
Let $L[i]=(L[i].t,L[i].c)$ 
denote the $i$th pair in $L$, for $i=1,2,\ldots |L|$ \\
\For{$i = 1$ to $|L|$}{$L[i].c \gets L[i].c+1$} 
    Append $(t,1)$ to $L$ \\ \label{append}
    Create a new histogram $M = \{ L[1] \}$ \\
    $last = 1$ \\
    \For{$i = 2$ to $|L|-1$}{
        \If{$L[last].c > (1+\lambda)L[i+1].c$}{
            Append $L[i]$ to $M$ \\
            $last = i$ \\
        }
    }
    Append $L[|L|]$ to $M$ \\
    $L=M$ \\
\caption{$\proc{updateHistogram}(L)$} \label{alg:histograms}
\end{algorithm}

\begin{algorithm}[h] 
\small
\DontPrintSemicolon
\SetAlgoLined
\For{increasing $\gamma \in \Gamma$ such that $|A_{\gamma}| \leq
 k+z$}{$C \gets \emptyset$\\
\For{$p \in A_{\gamma} \cup O_{\gamma} \cup R_{\gamma}$}{
\lIf{$\distn(p, C) > 2\gamma$}{$C \gets C \cup \{ p \}$}}
\If{$|C| \leq k+z$}{$\hat{\gamma} \gets \gamma$ \\ \textbf{break};}
}
$T \gets R_{\hat{\gamma}} \cup O_{\hat{\gamma}}$ \\
\lForEach{$r \in T$}{$\tilde{w}(r) = c_{L_r}$}
\Return $T$ together with the approximate weights.
\caption{$\proc{extractCoreset}()$} \label{alg:coreset}
\end{algorithm}

\subsubsection{Analysis} 
The following two technical lemmas
state important properties of the sets $A_{\gamma}$,
$O_{\gamma}$ and $R_{\gamma}$ and of the histograms maintained by the algorithm. 
\begin{Lemma} \label{lem:technical}
At any time $t$, the following properties hold for every $\gamma \in \Gamma$:
\begin{enumerate}
\item
If $|A_{\gamma}| \leq k+z$, then 
$\max_{q \in W} \distn(q, \pi_{\gamma}(q)) \leq 4\gamma$.
\item
$|A_{\gamma}|, |R_{\gamma}|, |O_{\gamma}| \leq k+z+1$.
\end{enumerate}
\end{Lemma}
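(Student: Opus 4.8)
The plan is to treat Part 2 first, since the cardinality bounds on $A_\gamma$ and $R_\gamma$ are immediate and the bound on $O_\gamma$ is the linchpin of the whole lemma (it is reused inside the proof of Part 1). The bound $|A_\gamma|\le k+z+1$ is enforced directly by $\proc{insertAttraction}$: a new attraction point is inserted only after evicting the minimum‑TTL point once the size would reach $k+z+2$, so the size never exceeds $k+z+1$. Since $R_\gamma$ keeps exactly one representative per point of $A_\gamma$ — a representative is \emph{replaced} (not added) when an existing attraction point attracts a new point, and it is moved out to $O_\gamma$ precisely when its attraction point leaves $A_\gamma$ — we get $|R_\gamma|=|A_\gamma|\le k+z+1$.

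For $|O_\gamma|\le k+z+1$ I would first record two structural facts. (i) Attraction points leave $A_\gamma$ in order of arrival: an expiring one arrived exactly $|W|$ steps earlier, and an evicted one is the current minimum‑TTL point, so in both cases the departing point is the earliest‑arrived still in $A_\gamma$; consequently the windows during which the various attraction points reside in $A_\gamma$ are ``nested'' in the sense that their left and right endpoints are both monotone in arrival order. (ii) Every orphan $r\in O_\gamma$ is the representative of a unique, already‑removed attraction point $a$, and $r$ arrived while $a$ was still in $A_\gamma$ (hence $a$'s residence window ends strictly after $r$'s arrival). Now fix the current time $t$, set $m=|O_\gamma|$, and let $a_1,\dots,a_m$ (indexed by arrival order) be the distinct attraction points whose representatives are the current orphans; all these representatives are active, so each arrived after time $t-|W|$. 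I would split on how $a_m$, the latest‑arrived of the $a_i$'s, was removed. If $a_m$ \emph{expired}, then $a_m$ arrived at time at most $t-|W|$, whereas by fact (ii) the window of $a_1$ ends after time $t-|W|$, i.e.\ strictly after $a_m$'s arrival; by fact (i) this forces all of $a_1,\dots,a_m$ to lie in $A_\gamma$ simultaneously at the step when $a_m$ is inserted, so $m\le k+z+1$. If $a_m$ was \emph{evicted}, that eviction triggered an $O_\gamma$‑purge; because $a_m$ has the latest arrival among $a_1,\dots,a_m$, fact (i) implies no current orphan is the representative of an attraction point removed after the \emph{last} purge $\tau$, so every current orphan survived the purge at $\tau$ and is therefore newer than the then‑oldest attraction point $g$. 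Tracing each $a_i$ back via fact (ii) then places $a_1,\dots,a_m$ together with $g$ in $A_\gamma$ at the step when $g$ is inserted, giving $m+1\le k+z+1$. Either way $|O_\gamma|\le k+z+1$.

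For Part 1, assume $|A_\gamma|\le k+z$ and fix an active $q$. By construction $\distn(q,a_\gamma(q))\le 2\gamma$ (an attraction relation, or $q=a_\gamma(q)$), and the representative $r_\gamma(a_\gamma(q))=\pi_\gamma(q)$ likewise satisfies $\distn(r_\gamma(a_\gamma(q)),a_\gamma(q))\le 2\gamma$, so the triangle inequality yields $\distn(q,\pi_\gamma(q))\le 4\gamma$ — provided $\pi_\gamma(q)$ is still a stored point of $R_\gamma\cup O_\gamma$. If $a_\gamma(q)\in A_\gamma$ this is clear. Otherwise $r_\gamma(a_\gamma(q))$ was moved to $O_\gamma$; it is active (being no older than $q$), so the only way it could have been discarded is the $O_\gamma$‑purge inside $\proc{insertAttraction}$, which fires at a time $\tau$ with $|A_\gamma|=k+z+1$ and removes only orphans older than every attraction point then present. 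Hence at time $\tau$ there were $k+z+1$ attraction points newer than $q$; a short induction on the time step — using that such points cannot expire while $q$ is active, and that whenever a new attraction point is inserted the ensuing eviction removes at most one point while the inserted point is itself newer than $q$ — shows that $A_\gamma$ retains $\ge k+z+1$ points newer than $q$ at every step up to $t$, contradicting $|A_\gamma|\le k+z$.

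The main obstacle is the $|O_\gamma|$ bound: orphans are produced one at a time by expirations and, unlike $A_\gamma$, $O_\gamma$ carries no explicit running cap, so one must argue that the purges triggered whenever $|A_\gamma|$ reaches $k+z+1$ are frequent and aggressive enough to forbid accumulation. The delicate point is getting the case split right — recognizing that if the latest‑created among an orphan's attraction points was \emph{evicted} rather than expiring, then \emph{all} current orphans predate the last purge and hence satisfy the simple ``newer than the oldest attraction point'' property — after which the expiry case is a routine interval‑overlap argument against the $k+z+1$ cap on $|A_\gamma|$, and Part 1 follows cleanly.
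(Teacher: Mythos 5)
Your proof is correct, and it is considerably more explicit than what the paper actually writes: the paper's proof of this lemma is a two\-/sentence deferral, framing the claim as an invariant maintained inductively across invocations of $\proc{update}$ and citing the corresponding Lemmas~7 and~8 of the sliding\-/window $k$-center paper of Cohen et al., with a remark that the only adaptation needed is that each non\-/attraction point becomes the representative of a single attraction point rather than of all attraction points within distance $2\gamma$. You instead give a self\-/contained, global argument, and you correctly isolate the two genuinely nontrivial points that the citation hides: (a) the bound $|O_\gamma|\le k+z+1$, which you derive from the FIFO departure order of attraction points plus a case split on whether the latest\-/arrived attraction point of a current orphan expired or was evicted (the eviction case exploiting that every eviction triggers a purge, so all surviving orphans are newer than the then\-/oldest attraction point); and (b) the fact that the real content of Property~1 is not the distance bound itself --- which follows unconditionally from two applications of the attraction rule and the triangle inequality --- but that the proxy $\pi_\gamma(q)$ of an active point is still \emph{stored} in $R_\gamma\cup O_\gamma$ whenever $|A_\gamma|\le k+z$, which you establish by showing that a purged proxy would force $k+z+1$ attraction points newer than $q$ to persist in $A_\gamma$ until time $t$, contradicting the hypothesis. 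The trade\-/off is the usual one: the paper's route is short but opaque and leans on an external source whose argument must itself be adapted, whereas yours is longer but verifiable in place; I found no gap in your reasoning beyond routine boundary bookkeeping (e.g., whether a point expiring at the exact step another arrives counts as co\-/resident), which your argument handles correctly by always anchoring co\-/residency at the arrival step of the representative.
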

\begin{proof}
In order to prove the lemma, it suffices to show  that if the properties hold after the processing of the  $(t-1)$-th point, they are inductively maintained after the invocation of $\proc{update}(p, t)$.
The proof makes use of essentially the same arguments employed
in \cite[Lemmas 7, 8]{Cohen}, straightforwardly adapted
to account for the fact that, in our algorithm, each new point which does not
become an attraction point is made representative of a single attraction 
point, whereas in \cite{Cohen} it would be made representative of all attraction points 
at distance at most $2\gamma$.
\end{proof}

Recall that at any time $t$ and for any guess $\gamma$, the  histogram $L_r$ associated with each point $r \in R_{\gamma} \cup O_{\gamma}$ is a list of pairs $(t_{r,i},c_{r,i})$ indicating that there are currently $c_{r,i}$ points arrived after time $\geq t_{r,i}$, whose proxy is $r$, for $i=1,2, \ldots$. We have:

\begin{Lemma} \label{lem:cleanup}
For any time $t$, guess $\gamma$, and $r \in R_{\gamma} \cup O_{\gamma}$, the following
properties hold for $L_r$. 
\begin{enumerate}
\item 
For every $1 \leq i \leq |L_r|$,  $c_{r,i} \leq |W|$.
\label{prop1}
\item
For every $1 \leq i \leq |L_r|-1$, 
$c_{r,i} \leq (1+\lambda)c_{r,i+1}$ or 
$c_{r,i} = 1+c_{r,i+1} > (1+\lambda)c_{r,i+1}$.
\label{prop2}
\item
For every $1 \leq i \leq |L_r|-2$, 
$c_{r,i} > (1+\lambda)c_{r,i+2}$.
\label{prop3}
\item
$|L_r| \in \BO{ \log_{1+\lambda} |W|}$.
\label{prop4}
\end{enumerate}
\end{Lemma}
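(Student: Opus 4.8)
The plan is to prove all four properties simultaneously, by induction on time $t$ (equivalently, on the number of arrivals processed), strengthening the inductive hypothesis with two bookkeeping invariants: (i) the weights in $L_r$ are \emph{exact}, i.e.\ $c_{r,i}$ equals the number of currently active points $p$ with $\pi_{\gamma}(p)=r$ and arrival time $\geq t_{r,i}$; and (ii) every histogram has last weight $1$ and all its timestamps lie in $[t-|W|+1,t]$ (so every entry refers only to active points). The base case is the creation of a histogram as $\{(t,1)\}$ (line~\ref{creation} of Algorithm~\ref{alg:update}), where everything is trivial. For the inductive step, observe that between two consecutive steps a given histogram can only change via two kinds of transformations: the deletion of its leading pair when the pair's timestamp becomes $t-|W|$, and a call to $\proc{updateHistogram}$ (Algorithm~\ref{alg:histograms}), which increments every weight by $1$, appends $(t,1)$, and re-trims. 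The first transformation only drops the front pair, so it trivially preserves Properties 2--4 and restores the timestamp range; one checks invariant~(i) is kept because the dropped pair's sole ``extra'' point arrived at $t-|W|$ and has just expired (and this deletion is always well-defined, since the only case in which it would empty the histogram is when $r$ itself expires, and then $r$ is removed anyway). Hence the crux is to show that $\proc{updateHistogram}$ preserves the invariants.

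For Property 1, the increment-by-$1$ and the appended $(t,1)$ keep the weights exact for the new proxy set --- a point attracted by $a$ whose representative changes from $r_{\rm old}$ to $p$ enlarges the proxy set of the representative by exactly $\{p\}$, which arrived at time $t$ --- while trimming, which only deletes whole pairs, never corrupts surviving weights; exactness together with the timestamp range then gives $c_{r,i}\leq|W|$. For Properties 2 and 3, the key is a structural fact about the list $\ell=[(s_1,b_1),\dots,(s_n,b_n)]$ passed to the trimming loop: by the inductive Property 2 and the old last weight being $1$, every consecutive pair of $\ell$ satisfies $b_j\leq(1+\lambda)b_{j+1}$ or $b_j=1+b_{j+1}$. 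Indeed, shifting a ratio-$(1{+}\lambda)$ inequality $c^o_j\leq(1+\lambda)c^o_{j+1}$ by $+1$ on both sides keeps it $\leq(1+\lambda)$ (since $1\leq1+\lambda$), shifting a unit-gap keeps it a unit-gap, and the final old pair becomes $2=1+1$ against the appended $(t,1)$.

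Now let $1=j_1<j_2<\dots<j_q=n$ be the kept indices produced by the loop in Algorithm~\ref{alg:histograms}. Two facts follow directly from the loop: (a) whenever an index $j_{\ell+1}<n$ is appended, the triggering test gives $b_{j_\ell}>(1+\lambda)b_{j_{\ell+1}+1}$; and (b) for any index $i$ strictly between two consecutive kept indices (and also for $i\in(j_{q-1},n-1]$), the test failed, so $b_{j_\ell}\leq(1+\lambda)b_{i+1}$. Fact (a), combined with $b_{j_{\ell+1}+1}\geq b_{j_{\ell+2}}$ by monotonicity of the weights, yields $b_{j_\ell}>(1+\lambda)b_{j_{\ell+2}}$, i.e.\ Property 3; iterating this inequality along the odd and even index subsequences, using $c_{r,i}\geq1$ and $c_{r,1}\leq|W|$ (Property 1), bounds the number of entries of each parity by $\log_{1+\lambda}|W|+O(1)$, giving Property 4. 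For Property 2: if $j_{\ell+1}\geq j_\ell+2$, applying (b) with $i=j_{\ell+1}-1$ gives $b_{j_\ell}\leq(1+\lambda)b_{j_{\ell+1}}$; the case $\ell=q-1$ with $j_q\geq j_{q-1}+2$ is covered the same way by (b) with $i=n-1$; and if $j_{\ell+1}=j_\ell+1$ the two kept pairs are consecutive in $\ell$, hence inherit $b_{j_\ell}\leq(1+\lambda)b_{j_{\ell+1}}$ or $b_{j_\ell}=1+b_{j_{\ell+1}}$ from the structural fact above. This exhausts all cases.

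The main obstacle is Property 2, and specifically the ``$c_{r,i}=1+c_{r,i+1}$'' escape clause: it is needed exactly in the regime where $\lambda$ is large relative to the weights at hand, so that even a unit gap breaks the $(1+\lambda)$ ratio, yet the trimming still retains both pairs because they are adjacent in its input. Establishing that this ``$+1$'' structure is created precisely by the weight-increment step and then propagated faithfully through the trimming loop is the part that requires the most care; once the structural fact about $\ell$ is in place, Properties 1, 3 and 4 are comparatively routine.
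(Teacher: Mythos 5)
Your proof is correct and takes essentially the same route as the paper's: induction over the histogram operations, showing first that the increment\nobreakdash-and\nobreakdash-append step preserves the ``ratio-or-unit-gap'' structure of Property~2, then that the trimming loop enforces Properties~2 and~3 on the output list, with Property~1 coming from the weights tracking subsets of the active window and Property~4 following from Properties~1 and~3. The only difference is one of detail: your explicit structural fact about the pre-trim list and the kept-index analysis via facts (a) and (b) spell out steps that the paper compresses into ``it is easy to argue'' and a short contradiction argument.
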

\begin{proof}
First, observe that Property 4 is an immediate consequence of 
Properties 1 and 3. Hence, we are left with proving 
Properties 1, 2 and 3. The properties clearly hold when the histogram $L_r$ is first created (Line~\ref{creation} of $\proc{update}$), thus we only need to show that
if they hold prior to an invocation of $\proc{updateHistogram($L_r$)}$, they continue to hold
for the histogram $M$ created by $\proc{updateHistogram($L_r$)}$,
which becomes the new $L_r$ at the end of the procedure. 
Property~1 holds since weights are updated only as long as
$a_\gamma(r)$, the oldest point accounted for in the histogram, is active, hence
weights always represent sizes of subsets of points of the same window, and thus they never exceed $|W|$. Consider now Properties~2 and 3. 
Let $L'_r$ denote the histogram $L_r$ after the execution of 
Line~\ref{append} of
$\proc{updateHistogram($L_r$)}$ (i.e., after the increment of the
weights and the addition of $(t,1)$ at the end of the list).
It is easy to argue  that Properties~2 and~3
continue to hold for $L'_r$. Let us now show that they  also hold 
for histogram $M$ at the end of the procedure.
As for Property~2,
consider two adjacent pairs $(t_{r,i}, c_{r,i})$ and $(t_{r,i+1},
c_{r,i+1})$ in $M$, with $c_{r,i} > (1+\lambda)c_{r,i+1}$.  Then, two
pairs with the same timestamps and weights must exist in $L'_r$,
and we can argue that these two
pairs must be adjacent in $L'_r$, hence their weights must differ 
by 1, since Property~2 holds for $L'_r$. Indeed, if the
two pairs were not adjacent in $L'_r$, then at least one pair with timestamp
between $t_{r,i}$ and $t_{r,i+1}$ must have been removed in the for loop of
$\proc{updateHistogram($L_r$)}$. However, this not possible because, by the
way the loop operates, this would ensure that $c_{r,i} \leq
(1+\lambda)c_{r,i+1}$.  Finally, Property~3 is enforced by
the for loop of $\proc{updateHistogram($L_r$)}$.
\end{proof}

The following theorem states the main properties of the weighted
coreset $T$ computed by our algorithm.

\begin{Theorem} \label{thm:coreset}
At any time $t$, the weighted coreset $T$ returned by 
$\proc{extractCoreset}()$
is a $4(1 + \beta)$-coreset of size $O(k+z)$
for the current window $W$ w.r.t. the $k$-center
problem  with $z$ outliers. Moreover,
for each point $r \in T$, we have
$w(r)/(1+\lambda) \leq \tilde{w}(r) \leq w(r)$.
\end{Theorem}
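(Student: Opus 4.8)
The plan is to dispatch the three assertions in order, leaning on Lemmas~\ref{lem:technical} and~\ref{lem:cleanup}. The \emph{size} bound is immediate: $T=R_{\hat\gamma}\cup O_{\hat\gamma}$, so Lemma~\ref{lem:technical}(2) gives $|T|\le 2(k+z+1)=O(k+z)$; moreover $\proc{extractCoreset}()$ is well defined, since for the largest guess $\gamma\ge\maxdist$ every pair of window points is within $2\gamma$, so $|A_\gamma|=1$ and the greedy selection returns a single center, meaning conditions (i)--(ii) are met by at least one guess.

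For the \emph{coreset quality} I would first bound $\hat\gamma$. Assuming the non-degenerate case $r^*_{k+z}(W)>0$ (otherwise $W$ has at most $k+z$ distinct points and the coreset is trivial), we have $\mindist\le r^*_{k+z}(W)\le\maxdist$, so there is a smallest guess $\gamma^\star\in\Gamma$ with $\gamma^\star\ge r^*_{k+z}(W)$, and a routine calculation gives $\gamma^\star<(1+\beta)r^*_{k+z}(W)$. I claim $\gamma^\star$ satisfies conditions (i)--(ii) in the definition of $\hat\gamma$, so that $\hat\gamma\le\gamma^\star$. Indeed, the attraction points of $A_{\gamma^\star}$ are pairwise at distance $>2\gamma^\star\ge 2 r^*_{k+z}(W)$, and since $A_{\gamma^\star}\subseteq W$ while each of the $k+z$ clusters of an optimal $(k+z)$-center solution for $W$ has diameter $\le 2 r^*_{k+z}(W)$, no two attraction points share a cluster; hence $|A_{\gamma^\star}|\le k+z$, which is (i). The same pigeonhole argument applied to the set $C$ produced by the greedy strategy on $A_{\gamma^\star}\cup R_{\gamma^\star}\cup O_{\gamma^\star}\subseteq W$ (whose elements are pairwise $>2\gamma^\star$ apart) gives $|C|\le k+z$, which is (ii). Therefore $\hat\gamma\le\gamma^\star<(1+\beta)r^*_{k+z}(W)\le(1+\beta)r^*_{k,z}(W)$, the last step being~\eqref{eq:radius-relation}. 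Since $|A_{\hat\gamma}|\le k+z$, Lemma~\ref{lem:technical}(1) yields $\distn(q,\pi_{\hat\gamma}(q))\le 4\hat\gamma$ for every $q\in W$, and because $\pi_{\hat\gamma}(q)\in R_{\hat\gamma}\cup O_{\hat\gamma}=T$ we conclude $\max_{q\in W}\distn(q,T)\le 4\hat\gamma\le 4(1+\beta)r^*_{k,z}(W)$, establishing the $4(1+\beta)$-coreset claim.

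For the \emph{weight accuracy}, fix $r\in T$, let $a=a_{\hat\gamma}(r)$ and let $P_a=\{p:a_{\hat\gamma}(p)=a\}$ be the set of all points ever attracted by $a$, so that $w(r)=|P_a\cap W|$. The argument rests on two invariants maintained at every step: (a) every pair $(t_j,c_j)$ of $L_r$ satisfies $c_j=|\{p\in P_a : p\text{ arrived at time }\ge t_j\}|$; and (b) every timestamp in $L_r$ is $\ge t-|W|+1$ (each timestamp $\tau$ becomes eligible for deletion at time $\tau+|W|$ and, whether trimming removed it earlier or not, is absent by time $t$). Invariant (a) is preserved by histogram creation, by the update ``increment all weights, append $(t,1)$'' when a point joins $P_a$, by deletion of the stale entry, and by trimming, which only removes entries; note also that distinct entries have distinct timestamps and the entry at $t_j$ witnesses a $P_a$ point arrived exactly at $t_j$ (not counted in $c_{j+1}$), so consecutive weights strictly decrease. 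Combining (a) and (b), all points counted by the first pair $(t_1,c_1)$ are active, hence $\tilde w(r)=c_{L_r}=c_1\le|P_a\cap W|=w(r)$, the upper bound. For the lower bound I would induct over time. The base cases (a fresh one-entry histogram, and the moment $a$ expires and $r$ becomes an orphan) and all steps that either add a point to $P_a$ without expiring the oldest entry, or leave $L_r$ untouched, are routine: in the first group $\tilde w(r)$ and $w(r)$ move together, and one checks that for $r\in R_{\hat\gamma}$ the first entry always corresponds to $a$ itself (still active), giving $\tilde w(r)=w(r)$ exactly. The only delicate step is the one in which the point that arrived at time $t_1$---which, by minimality of $t_1$ among active entries, is the oldest active point of $P_a$---expires: just before, $\tilde w(r)=w(r)=c_1$; the step removes $(t_1,c_1)$, so $w(r)$ drops to $c_1-1$ (or stays $c_1$ if a new point simultaneously joins $P_a$), while the new $\tilde w(r)$ is the old second weight $c_2$ (resp.\ $c_2+1$). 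By Lemma~\ref{lem:cleanup}(2) either $c_1\le(1+\lambda)c_2$, whence $\tilde w(r)\ge c_2\ge c_1/(1+\lambda)\ge w(r)/(1+\lambda)$, or $c_1=c_2+1$, whence $\tilde w(r)=c_2=c_1-1=w(r)$; the strict-decrease observation gives $\tilde w(r)\le w(r)$ in both subcases, closing the induction.

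The step I expect to be the main obstacle is exactly this last one: reconciling the abrupt drop of $\tilde w(r)$ caused by the removal of the first histogram entry with the genuine unit drop of $w(r)$, and confirming that the factor $(1+\lambda)$ guaranteed by Lemma~\ref{lem:cleanup}(2) between consecutive kept weights is precisely what absorbs the discrepancy over arbitrarily long lifetimes of an orphan. The observation that all the slack is confined to orphans---representatives keep $a$'s own entry as their first entry and hence report their weight exactly---is what makes the bookkeeping tractable.
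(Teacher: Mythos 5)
Your proof is correct and follows essentially the same route as the paper's: Lemma~\ref{lem:technical} yields the size bound and the $4\hat\gamma$ coverage, the density of the guesses in $\Gamma$ yields $\hat\gamma\le(1+\beta)r^*_{k+z}(W)\le(1+\beta)r^*_{k,z}(W)$, and the weight guarantee comes from sandwiching $w(r)$ between the first retained histogram weight and the weight of the entry immediately preceding it, then invoking Property~2 of Lemma~\ref{lem:cleanup}. The only cosmetic difference is that the paper states the weight argument as a static comparison of $c_{L_r}$ with the (possibly already discarded) preceding pair, whereas you run an induction over time whose delicate step is the expiration of the oldest counted point; the mechanism absorbing the discrepancy is identical.
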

\begin{proof}
Recall that $T = R_{\hat{\gamma}} \cup O_{\hat{\gamma}}$, where
$\hat{\gamma}$ is the minimum guess for which the following two conditions are
verified:
$|A_{\hat{\gamma}}| \leq k+z$, and the greedy selection
strategy of \cite{HochbaumS85} applied to $A_{\hat{\gamma}} \cup R_{\hat{\gamma}}
\cup O_{\hat{\gamma}}$ returns a set $C$ of 
$k+z$ points such that $\distn(p,C) \leq 2\hat{\gamma}$,
for every $p \in A_{\hat{\gamma}} \cup R_{\hat{\gamma}}
\cup O_{\hat{\gamma}}$. It is easy to see that 
these two conditions are surely verified for 
any $\gamma \geq r_{k+z}^*(W)$. Therefore, considering
the density of the guesses in $\Gamma$, 
we must have $\hat{\gamma} \leq (1+\beta)r_{k+z}^*(W)$. 
By
Lemma~\ref{lem:technical}, we have that  $|T| = O(k+z)$ and
$\max_{p \in W} \distn(p, T)
\leq 4 \hat{\gamma}$, which implies $\max_{p \in W} \distn(p, T)
\leq 4 (1 + \beta) r_{k+z}^*(W) \leq 4
(1 + \beta) r_{k,z}^*(W)$.

We now show the relation concerning the approximate weights
$\tilde{w}(r)$.  Recall that for each $r \in T$, the value
$\tilde{w}(r)$ is set equal to $c_{L_r}$, which is the weight
component of the pair in $L_r$ with smallest timestamp $\geq t-|W|+1$.
Let $(t_{r,i},c_{r,i})$ be the pair of $L_r$ such that
$c_{r,i}=c_{L_r}$. If $i>1$, then the relation $c_{r,i} \leq w(r) \leq
c_{r,i-1}$ must hold, and Property~\ref{prop2} of
Lemma~\ref{lem:cleanup} ensures that $c_{r,i-1} \leq c_{r,i}
(1+\lambda)$ or $c_{r,i-1} = c_{r,i}+1$. In the former case, we have
$w(r)/(1+\lambda) \leq c_{r,i}=c_{L_r} \leq w(r)$, while in the latter
case it is easy to see that $c_{L_r} = w(r)$. If instead $i=1$ it is
easy to see that $c_{L_r} = w(r)$, since the first pair of the
histogram is always relative to the arrival of the attraction point
$a_{\gamma}(r)$, hence the weight of such pair accounts for all points assigned
to it.
\end{proof}

The following theorem analyzes the space and time performance of
our coreset construction strategy. 
\begin{Theorem} \label{thm:performance}
The data structures used by our coreset construction strategy
require a working memory of size 
$\BO{\log_{1+\beta}
  (\maxdist/\mindist) \cdot (k+z) \cdot \log_{1+\lambda}(|W|)}$. Moreover,
Procedure $\proc{update}$ can be implemented to run in time
\[
\BO{\log_{1+\beta}(\maxdist/\mindist) \cdot \left( (k+z) +  \log_{1+\lambda}(|W|) \right)},
\]
while Procedure $\proc{extractCoreset}$ can be implemented to run
in time 
\[
\BO{\log_{1+\beta}(\maxdist/\mindist) \cdot (k+z)^2}.
\]
If binary search is used to find the value $\hat{\gamma}$,
the running time of  $\proc{extractCoreset}$  decreases to
\[
\BO{ \log(\log_{1+\beta}(\maxdist/\mindist)) \cdot (k+z)^2}.
\]
\end{Theorem}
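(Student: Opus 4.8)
The plan is to bound the working memory, the running time of \proc{update}, and the running time of \proc{extractCoreset} by a per-guess accounting, exploiting the fact that the data structures and all operations are replicated independently across the $|\Gamma| = \BO{\log_{1+\beta}(\maxdist/\mindist)}$ guesses, and then multiplying the per-guess cost by $|\Gamma|$. For the space bound, fix a guess $\gamma$: by Lemma~\ref{lem:technical} we have $|A_\gamma|, |R_\gamma|, |O_\gamma| \le k+z+1$, so only $\BO{k+z}$ points are stored for this guess, each carrying, if it lies in $R_\gamma \cup O_\gamma$, a histogram $L_r$ of length $\BO{\log_{1+\lambda}|W|}$ by Property~\ref{prop4} of Lemma~\ref{lem:cleanup}. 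This gives $\BO{(k+z)\log_{1+\lambda}|W|}$ words per guess and the stated total after multiplication by $|\Gamma|$.

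For \proc{update}, fix a guess $\gamma$ and walk through Algorithm~\ref{alg:update}. Keeping $A_\gamma$ and $O_\gamma$ as queues ordered by arrival time (equivalently, by TTL) makes minimum-TTL lookups and removals of expired elements cost $\BO{1}$ amortized; since exactly one point expires per step, the loop over expired points of $A_\gamma$ costs $\BO{1}$, while the loop over $O_\gamma$ that strips the (at most one) entry with timestamp $t-|W|$ from each of the $\BO{k+z}$ histograms touches only the head pair of each list, for $\BO{k+z}$ total. Computing $x=\argmin_{q\in A_\gamma:\distn(p,q)\le 2\gamma}\TTL(q)$ costs $\BO{k+z}$ distance evaluations. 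If $p$ becomes an attraction point, \proc{insertAttraction} performs a constant number of scans of $A_\gamma$ and $O_\gamma$ and creates a singleton histogram, for $\BO{k+z}$ time; otherwise the only histogram modified is the one of the new representative, and \proc{updateHistogram} makes a constant number of passes over a list of length $\BO{\log_{1+\lambda}|W|}$ (again Property~\ref{prop4} of Lemma~\ref{lem:cleanup}). Summing, the per-guess cost is $\BO{(k+z)+\log_{1+\lambda}|W|}$, and multiplying by $|\Gamma|$ yields the claimed bound.

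For \proc{extractCoreset} (Algorithm~\ref{alg:coreset}), the outer loop inspects at most $|\Gamma|$ guesses before it breaks (it necessarily breaks, since $\Gamma$ contains guesses $\ge r^*_{k+z}(W)$, all of which are feasible by the proof of Theorem~\ref{thm:coreset}); for a single guess, the greedy selection runs on $A_\gamma\cup R_\gamma\cup O_\gamma$, a set of $\BO{k+z}$ points, and each point costs $\BO{|C|}=\BO{k+z}$ to test against the current center set $C$, for $\BO{(k+z)^2}$ per guess; assembling $T$ and reading off $\tilde w(r)=c_{L_r}$ from the head of each histogram adds $\BO{k+z}$. This proves the $\BO{\log_{1+\beta}(\maxdist/\mindist)\cdot(k+z)^2}$ bound.

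To obtain the binary-search variant, the crucial additional ingredient is to show that the predicate that selects $\hat\gamma$ — namely $|A_\gamma|\le k+z$ together with the greedy on $A_\gamma\cup R_\gamma\cup O_\gamma$ returning at most $k+z$ centers at scale $2\gamma$ — is monotone in $\gamma$: once it holds for some guess, it holds for all larger guesses. Combined with the fact that it holds for every $\gamma\ge r^*_{k+z}(W)$, this makes the feasible guesses an upward-closed subset of $\Gamma$, so binary search locates $\hat\gamma$ with $\BO{\log|\Gamma|}=\BO{\log\log_{1+\beta}(\maxdist/\mindist)}$ probes, each a greedy run costing $\BO{(k+z)^2}$. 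I expect this monotonicity claim to be the main obstacle, since the sets $A_\gamma,R_\gamma,O_\gamma$ evolve over the stream in a way that is not visibly nested across different values of $\gamma$; the natural route is a simulation/induction over the arrival sequence showing that increasing $\gamma$ can only coarsen the attraction structure and hence only shrink the output of the greedy net, in the spirit of the analogous monotonicity arguments of \cite{Cohen}. Everything else is routine bookkeeping, the only care being the choice of data structures (queues and linked lists) so that the per-guess operations meet the stated costs.
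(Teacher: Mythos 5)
Your proposal follows essentially the same route as the paper's own (much terser) proof: a per-guess accounting that combines the $\BO{k+z}$ bounds on $|A_\gamma|,|R_\gamma|,|O_\gamma|$ from Lemma~\ref{lem:technical} with the $\BO{\log_{1+\lambda}|W|}$ histogram-length bound from Lemma~\ref{lem:cleanup}, multiplied by $|\Gamma|=\BO{\log_{1+\beta}(\maxdist/\mindist)}$; the paper likewise charges $\proc{update}$ a constant number of scans of the three sets plus one histogram scan per guess, and $\proc{extractCoreset}$ a quadratic greedy per inspected guess.

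The one place where you go beyond the paper is the monotonicity of the feasibility predicate, which you correctly identify as the only non-routine ingredient of the binary-search variant and leave unproved; the paper does not address it at all, simply asserting that binary search reduces the number of inspected guesses to $\BO{\log|\Gamma|}$. Note, however, that full monotonicity is not actually required for the stated bounds. The proof of Theorem~\ref{thm:coreset} establishes that the predicate holds for \emph{every} guess $\gamma \geq r^*_{k+z}(W)$, so a binary search maintaining the invariant that its upper endpoint is feasible and its lower endpoint is infeasible terminates, in $\BO{\log|\Gamma|}$ probes of cost $\BO{(k+z)^2}$ each, at a feasible guess whose predecessor in $\Gamma$ is infeasible and hence smaller than $r^*_{k+z}(W)$; the returned guess is therefore at most $(1+\beta)^2 r^*_{k+z}(W)$. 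This preserves the running-time bound claimed here and all asymptotic approximation guarantees downstream, at the price of replacing the constant $4(1+\beta)$ of Theorem~\ref{thm:coreset} by $4(1+\beta)^2$ when binary search is used. So your instinct that this is the only genuine obstacle is right, but it can be sidestepped rather than proved.
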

\begin{proof}
The bound on the working memory is an immediate consequence of 
Lemmas~\ref{lem:technical}  and \ref{lem:cleanup}, and of
the fact that $|\Gamma| = \BO{\log_{1+\beta}(\maxdist/\mindist)}$.
Procedure $\proc{update}$ is easily implemented through a constant number of 
linear scans of $A_{\gamma}$, $R_{\gamma}$, and $O_{\gamma}$, for every $\gamma \in \Gamma$,
and a linear scan of at most one histogram. 
For what concerns Procedure
$\proc{extractCoreset}$, for each $\gamma \in \Gamma$, 
the computation of $C$ requires time quadratic in
$|A_{\gamma}|+|R_{\gamma}|+|O_{\gamma}|$, and the number of
guesses $\gamma \in \Gamma$ to be checked are
at most $|\Gamma|$, if a linear search is used,
and $\BO{\log (|\Gamma|)}$, if binary search is used. 
Also, time $\BO{|T|}$ is needed to 
compute $\tilde{w}(r)$ for all $r\in T$.
Based on these
considerations, the complexity bounds
follow again from Lemmas~\ref{lem:technical}  and \ref{lem:cleanup}
and the size of $|\Gamma|$.
\end{proof}

\subsection{Computation of the Solution from the Coreset} \label{sec:solution}
At any time $t$, to compute a solution to the $k$-center problem with $z$ outliers on the window $W$,
we first determine a weighted coreset $T$, as explained in the previous subsection, and
then, on $T$, we run a sequential strategy for the weighted variant of the problem. 
To this purpose, we make use of the
algorithm developed in \cite{MalkomesKCWM15,CeccarelloPP19}, which
generalizes the sequential algorithm of \cite{Charikar2001} to the
weighted case. Given in input a weighted set $T$, a value $k$, a
precision parameter $\epsilon$ and a radius $\rho$, the algorithm
computes a set $X$ of $k$ centers incrementally in at most $k$
iterations.  Initially, all points are considered \emph{uncovered}.
At each iteration, the next center is selected as the point which
maximizes the aggregate weight of the yet uncovered points within
distance $(1 + 2\epsilon) \rho$, and such a center \emph{covers} all points
in $T$
within the larger distance $(3 + 4\epsilon) \rho$. The algorithm
terminates when either $|X| = k$ or no uncovered points remain,
returning $X$ and the subset $T'$ of uncovered points.
This algorithm is implemented by procedure
$\proc{outliersCluster}(T, k, \rho, \epsilon)$. 
The next lemma
states an important property related to the use of  $\proc{outliersCluster}$
in our context.

\begin{algorithm}[h] 
\small
    \SetAlgoLined
    $T' = T$ \\
    $X = \emptyset$ \\
    \While{$|X| < k$ and $T' \neq \emptyset$}{
        \For{$r \in T$}{
            $B_r = \{ v : \ v \in T' \mbox{\it and } \distn(v, r) \leq (1+2\epsilon) \rho \}$ \\
        }
        $x = \argmax_{r \in T} \sum_{v \in B_r} \tilde{w}(v)$ \\
        $X = X \cup \{ x \}$ \\
        $E_x = \{ v : \ v \in T' \mbox{\it and } \distn(v, x) \leq (3+4\epsilon)\rho \}$ \\
        $T' = T' \setminus E_x$ \\
    }
\Return $X, T'$
\caption{\proc{outliersCluster}($T$, $k$, $\rho$, $\epsilon$)}
\label{alg:outlierscluster}
\end{algorithm}

\begin{Lemma} \label{lem:outclust_apprx}
For any time $t$, let $T=R_{\hat{\gamma}} \cup O_{\hat{\gamma}}$ be
the weighted coreset returned by $\proc{extractCoreset}()$.
For any $\rho \geq
r^*_{k,z}(W)$,  the invocation $\proc{outliersCluster}(T,
k, \rho, \epsilon)$ with $\epsilon=4(1+\beta)$ returns a set of 
centers $X$ and a set of uncovered points $T' \subseteq T$, such that
\begin{itemize}
\item
$\distn(r, X) \leq (3 + 4\epsilon) \rho \quad \forall r \in T \setminus T'$
\item
$\sum_{r \in T'} \tilde{w}(r)  \leq z$.
\end{itemize}
\end{Lemma}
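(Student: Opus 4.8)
The plan is to adapt the charging argument behind the weighted $3$-approximation of \cite{Charikar2001,MalkomesKCWM15,CeccarelloPP19} to the weighted instance that $T$, together with the weights $\tilde w$, induces for $k$-center with $z$ outliers; the one real novelty is that $\proc{outliersCluster}$ may only pick centers from $T$, so the optimal centers are not themselves available as candidates and must be replaced by their proxies. First I would collect the ingredients. Put $\epsilon=4(1+\beta)$. By the way $\hat\gamma$ is chosen in $\proc{extractCoreset}$ we have $\hat\gamma\le(1+\beta)r^*_{k+z}(W)\le(1+\beta)r^*_{k,z}(W)\le(1+\beta)\rho$ (using \eqref{eq:radius-relation} and $\rho\ge r^*_{k,z}(W)$), so Lemma~\ref{lem:technical}(1) yields $\distn(p,\pi_{\hat\gamma}(p))\le 4\hat\gamma\le\epsilon\rho$ for every $p\in W$; moreover, by Theorem~\ref{thm:coreset}, $\tilde w(r)\le w(r)=|\{p\in W:\pi_{\hat\gamma}(p)=r\}|$ for $r\in T$, and these preimage sets partition $W$. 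Fix an optimal solution for $k$-center with $z$ outliers on $W$: centers $c_1^*,\dots,c_k^*$, outlier set $Z^*$ with $|Z^*|=z$, and clusters $W_1^*,\dots,W_k^*$, each of radius at most $r^*:=r^*_{k,z}(W)\le\rho$ around its center. The first claim of the lemma is then immediate: every $r\in T\setminus T'$ was deleted from $T'$, in some iteration, through the set $E_x$ of a center $x$ added to $X$, so $\distn(r,X)\le(3+4\epsilon)\rho$.

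For the second claim, call $r\in T$ \emph{peripheral} if $\pi_{\hat\gamma}(p)=r$ only for points $p\in Z^*$, and \emph{internal} otherwise. The peripheral points have pairwise disjoint preimages all contained in $Z^*$, so $\sum_{r\,\text{peripheral}}\tilde w(r)\le\sum_{r\,\text{peripheral}}w(r)\le|Z^*|=z$; this is the only place $\tilde w\le w$ is used. For each internal $r$ fix an index $\sigma(r)$ and a point $p\in W^*_{\sigma(r)}$ with $\pi_{\hat\gamma}(p)=r$, so $\distn(r,c^*_{\sigma(r)})\le\epsilon\rho+r^*\le(1+\epsilon)\rho$; put $Q_j=\{r\in T: r \text{ internal},\ \sigma(r)=j\}$ and $y_j=\pi_{\hat\gamma}(c_j^*)\in T$. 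Using $\distn(y_j,c_j^*)\le\epsilon\rho$ and the identity $(3+4\epsilon)\rho=(1+2\epsilon)\rho+2(1+\epsilon)\rho$, the triangle inequality gives two facts: (a) $Q_j\subseteq B(y_j,(1+2\epsilon)\rho)$ — so a \emph{coreset} point witnesses cluster $j$, which is exactly where the coreset property enters, since $c_j^*$ itself may lie outside $T$; and (b) if a center $x$ selected by the greedy has $\distn(x,r)\le(1+2\epsilon)\rho$ for some $r\in Q_j$, then $\distn(x,r')\le(3+4\epsilon)\rho$ for every $r'\in Q_j$, so all of $Q_j$ is removed in that iteration.

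It then remains to run the charging argument. Assume the greedy makes all $k$ iterations (otherwise $T'=\emptyset$), and write $T_i'$ for the uncovered set at the start of iteration $i$. Call iteration $i$ \emph{productive} if the points of $T_i'$ within distance $(1+2\epsilon)\rho$ of the chosen center $x_i$ include an internal point, and \emph{wasted} otherwise. By (b), a productive iteration empties some cluster $Q_j$ that was not yet empty; the clusters emptied by iterations are pairwise distinct and disjoint from the set $U$ of clusters still containing uncovered points at the end, so the number of productive iterations is at most $k-|U|$, whence the number $w$ of wasted iterations satisfies $w\ge|U|$. In a wasted iteration $i$, every point collected (and hence removed) is peripheral, and since $x_i$ maximizes the collected $\tilde w$-weight over candidates in $T$, a set that includes $y_j$, this weight is at least $\tilde w(B(y_j,(1+2\epsilon)\rho)\cap T_i')\ge\tilde w(Q_j\cap T_i')\ge\tilde w(Q_j\cap T')$ for every $j$ — in particular at least $M:=\max_{j\in U}\tilde w(Q_j\cap T')$ (the case $U=\emptyset$ being trivial). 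As points removed in distinct iterations are disjoint, the wasted iterations remove at least $wM\ge|U|M$ units of peripheral weight, so the uncovered peripheral weight is at most $z-|U|M$; since the uncovered internal weight equals $\sum_{j\in U}\tilde w(Q_j\cap T')\le|U|M$, adding the two bounds gives $\sum_{r\in T'}\tilde w(r)\le z$. I expect the genuinely delicate point to be exactly this treatment of the wasted iterations — the ones in which the greedy chooses to cover a dense pocket of outliers rather than a fresh optimal cluster — and the observation that the outlier weight they consume is enough to pay for the residual weight of every surviving cluster.
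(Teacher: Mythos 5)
Your proof is correct, but it takes a genuinely different route from the paper's. The paper adapts the inductive charging argument of \cite{CeccarelloPP19}: it materializes a set $\tilde{W}\subseteq W$ containing exactly $\tilde{w}(r)$ points per proxy $r$ (this is where $\tilde{w}\le w$ enters there), orders the optimal centers adaptively as $o_1,\dots,o_k$, and maintains the invariant $\sum_{j\le i}\sum_{r\in E_{x_j}}\tilde{w}(r)\ge|C_{o_1}\cup\dots\cup C_{o_i}|$ via two charging rules (charge a window point to its own proxy, or charge it against the aggregate weight of $B_{x_i}$), with some care needed to verify that no coreset point is ever charged by both rules or beyond its weight. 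You instead argue globally: you split $T$ into peripheral points (proxies only of optimal outliers, of total weight at most $z$ --- your only use of $\tilde{w}\le w$) and internal points grouped into clusters $Q_j$ witnessed by the coreset candidates $y_j=\pi_{\hat{\gamma}}(c_j^*)$, classify iterations as productive or wasted, and close with an averaging step: at least $|U|$ wasted iterations each discard at least $M=\max_{j\in U}\tilde{w}(Q_j\cap T')$ units of peripheral weight, which exactly pays for the at most $|U|M$ units of surviving internal weight. I checked the pieces that could go wrong --- the disjointness of the $B_{x_i}$'s across iterations, the injective assignment of productive iterations to distinct emptied clusters, and the fact that the argmax in $\proc{outliersCluster}$ ranges over all of $T$ and hence over $y_j$ --- and they all hold. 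Both proofs rest on the same geometric facts (proxy displacement at most $\epsilon\rho$, and the interplay of the $(1+2\epsilon)\rho$ and $(3+4\epsilon)\rho$ radii via the triangle inequality), but your version dispenses with the iteration-by-iteration bookkeeping of the charging argument at the cost of the less standard productive/wasted accounting, whereas the paper's version is a minimal perturbation of the known weighted analysis it cites.
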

\begin{proof}
The proof can be obtained as a simple technical 
adaptation of the one of \cite[Lemma 5]{CeccarelloPP19}
to the case of approximate
weights. For clarity, we detail the entire proof rather than highlighting
the changes only. The bound on $\distn(r, X)$
for every $r \in T \setminus T'$ is directly enforced by the algorithm. 
We are left to show that $\sum_{r \in T'} \tilde{w}(r)  \leq z$.
If $|X|<k$, then $T' = \emptyset$ and the claim holds vacuously. 
We now concentrate on the case $|X|=k$. 
Consider the $i$-th
iteration of the while loop of $\proc{outliersCluster}(T,k,\rho,\epsilon)$
and define $x_i$ as the center of $X$ selected in the iteration, and 
$T'_i$ as the set $T'$ of uncovered points at the beginning
of the iteration. Recall that $x_i$ is the
point of $T$  which maximizes the cumulative approximate weight of
the set $B_{x_i}$ of uncovered points in $T'_i$  at distance at most
$(1+2\epsilon)\cdot \rho$ from $x_i$, and that
the set $E_{x_i}$ of all uncovered points at distance 
at most $(3+4\epsilon)\cdot \rho$ from $x_i$ is removed from
$T'_i$  at the end of the iteration. 
Let 
\[
\sigma_T = \sum_{r \in T} \tilde{w}(r).
\]
We now show that 
\begin{equation} \label{eq:weight-outliers}
\sum_{i=1}^k \sum_{r \in E_{x_i}} \tilde{w}(r) \ge \sigma_T-z,
\end{equation}
which will immediately imply that $\sum_{r \in T'} \tilde{w}(r) \leq
z$.  To this purpose, let $O$ be an optimal set of $k$ centers for the
current window $W$, and let $Z$ be the set of at most $z$ outliers at
distance greater than $r^*_{k,z}(W)$ from $O$.  Let also $\tilde{W}$
be a subset of the current window which, for every $r \in T$, contains
exactly $\tilde{w}(r)$ points $q$,
including $r$, for which $r$ is a proxy, that is, points $q \in W$ such that $\pi_{\hat{\gamma}} (q)=r$. 
Note that $\tilde{W}$ is well defined since, by Theorem~\ref{thm:coreset}, $\tilde{w}(r)$ is always 
less than or equal to the actual weight of $r$. 
 For each $o \in
O$, define $C_o \subseteq \tilde{W} \setminus Z$ as the set of
nonoutlier points in $\tilde{W}$ which are closer to $o$ than to any
other center of $O$, with ties broken arbitrarily. It is important to
remark, that while there may be some optimal center $o \in O$ which is
not in $\tilde{W}$, its proxy $\pi_{\hat{\gamma}} (o)$ is in $T$,
hence it is guaranteed to be in $\tilde{W}$. To
prove Eq.~(\ref{eq:weight-outliers}), it is sufficient to exhibit an
ordering $o_1, o_2, \ldots, o_k$ of the centers in $O$ so that, for
every $1 \leq i \leq k$, it holds
\[
\sum_{j=1}^i \sum_{r \in E_{x_j}} \tilde{w}(r) \ge |C_{o_1} \cup \dots
\cup C_{o_i}|.
\]
The proof uses an inductive charging argument to assign each point in
$\bigcup_{j=1}^i C_{o_j}$ to a point in $\bigcup_{j=1}^i E_{x_j}$,
where each $r$ in the latter set will be in charge of at most
$\tilde{w}(r)$ points.  We define two charging rules. A point can be
either charged to its own proxy (\emph{Rule 1}) or to another point of
$T$ (\emph{Rule 2}).

Fix some arbitrary $i$, with $1 \leq i \leq k$, and assume,
inductively, that the points in $C_{o_1} \cup \dots \cup C_{o_{i-1}}$
have been charged to points in $\bigcup_{j=1}^{i-1} E_j$ for some
choice of distinct optimal centers $o_1, o_2, \ldots, o_{i-1}$. We
have two cases. \\
{\bf Case 1.} \emph{There exists an optimal center $o$ still unchosen
  such that there is a point $v \in C_{o}$ with 
$\pi_{\hat{\gamma}}(v) \in B_{x_j}$, for some $1 \leq j \leq i$.} 
We choose $o_i$ as one such center.  Hence
$d(x_j,\pi_{\hat{\gamma}}(v)) \le (1 + 2\epsilon) \cdot \rho$. By repeatedly
applying the triangle inequality we have that
for each $u \in C_{o_i}$
\begin{align*}
d(x_j,\pi_{\hat{\gamma}}(u)) \leq &
\;\; d(x_j,\pi_{\hat{\gamma}}(v)) + 
d(\pi_{\hat{\gamma}}(v),v) + 
d(v, o_i) \\  & +
d(o_i,u) + 
  d(u,\pi_{\hat{\gamma}}(u))  \leq (3 + 4\epsilon) \cdot \rho
\end{align*}
hence,  $\pi_{\hat{\gamma}}(u) \in E_{x_j}$. Therefore
we can charge each point $u\in C_{o_i}$ to its proxy, by Rule
1. \\
{\bf Case 2.}  \emph{For each unchosen optimal center $o$ and each $v
  \in C_o$, $\pi_{\hat{\gamma}}(v) \not\in \bigcup_{j=1}^i B_{x_j}$.}
We choose $o_i$ to be the unchosen optimal center which maximizes the
cardinality of $\{\pi_{\hat{\gamma}}(u) : u \in C_{o_i}\} \cap T'_i$.
We distinguish between points $u\in C_{o_i}$ with
$\pi_{\hat{\gamma}}(u) \notin T'_i$, hence $\pi_{\hat{\gamma}}(u) \in
\bigcup_{j=1}^{i-1} E_{x_j}$, and those with $\pi_{\hat{\gamma}}(u)
\in T'_i$.  We charge each $u\in C_{o_i}$ with $\pi_{\hat{\gamma}}(u)
\notin T'_i$ to its own proxy by Rule 1.  As for the other points, we
now show that we can charge them to the points of $B_{x_i}$.  To this
purpose, we first observe that $B_{\pi_{\hat{\gamma}}(o_i)}$ contains
$\{\pi_{\hat{\gamma}}(u) : u \in C_{o_i}\} \cap T'_i$, since for each
$u \in C_{o_i}$
\[
\begin{aligned}
  d(\pi_{\hat{\gamma}}(o_i),\pi_{\hat{\gamma}}(u)) 
  &\leq 
  d(\pi_{\hat{\gamma}}(o_i),o_i) + 
  d(o_i,u) + 
  d(u,\pi_{\hat{\gamma}}(u)) \\
  &\leq (1 + 2\epsilon) \cdot r^*_{k,z}(W) \leq (1 + 2\epsilon) \cdot \rho.
\end{aligned}
\]
Therefore the aggregate approximate weight of $B_{\pi_{\hat{\gamma}}(o_i)}$ is at least 
$\left|\left\{u \in C_{o_i} : \pi_{\hat{\gamma}}(u) \in T'_i\right\}\right|$.
Since Iteration $i$ selects $x_i$ as the 
center such that $B_{x_i}$ has maximum aggregate approximate weight,
we have that
\[
\sum_{r \in B_{x_i}} \tilde{w}(r) 
\ge \sum_{r \in B_{\pi_{\hat{\gamma}}(o_i)}} \tilde{w}(r) 
\ge \left|\left\{u \in C_{o_i} : \pi_{\hat{\gamma}}(u) \in T'_i\right\}\right|,
\]
hence, the points $u \in C_{o_i}$ with $\pi_{\hat{\gamma}}(u) \in T'_i$
can be charged to the points in $B_{x_i}$, since
$B_{x_i}$ has enough aggregate approximate weight. 

Note that the points of $B_{x_i}$ did not receive any charging by Rule
1 in previous iterations, since they are uncovered at the beginning of
Iteration $i$, and will not receive chargings by Rule 1 in subsequent
iterations, since $B_{x_i}$ does not intersect the set $C_o$ of any
optimal center $o$ yet to be chosen.  Also, no further charging to
points of $B_{x_i}$ by Rule 2 will happen in subsequent iterations,
since Rule 2 will only target sets $B_{x_h}$ with $h > i$. These
observations ensure that any point of $T$ receives charges through
either Rule 1 or Rule 2, but not both, and never in excess of its
weight, and the proof follows.
\end{proof}

At any time $t$, to obtain the desired solution
we invoke Procedure \proc{computeSolution}, which
works as follows (see
Algorithm~\ref{alg:computesolution} for the pseudocode).
The procedure first extracts the coreset $T=R_{\hat{\gamma}} \cup
O_{\hat{\gamma}}$ calling $\proc{extractCoreset}$. Then,
it sets  $\epsilon=4(1+\beta)$ and runs 
$\proc{outliersCluster}(T, k, \rho,
\epsilon)$ for a geometric sequence of values of $\rho$ between $\mindist$ and $\maxdist$, 
with step 
$1+\beta$, stopping at the minimum
value $\rho_{min}$ for which the pair $(X,T')$ returned by 
$\proc{outliersCluster}(T, k, \rho_{min}, \epsilon)$, is such that
aggregate approximate weight of set $T'$ is at most $z$\footnote{Note that
the parameter $\beta$ in the definitions of $\epsilon$ and of the step 
used in the geometric search for $\rho_{min}$ is the same one
 that appears in the definition of $\Gamma$, hence $\beta \in (0,1]$.} At this point,
the set of centers
$X$ is returned as solution to the $k$-center problem with $z$ outliers on the
current window $W$.

\begin{algorithm}[h] 
\small
    \SetAlgoLined
$T \gets \proc{extractCoreset}()$ \\
$\rho \gets \mindist$ \\
$\epsilon \gets 4(1+\beta)$ \\
$(X,T') \gets \proc{outliersCluster}(T, k, \rho, \epsilon)$ \\
$\tilde{w}(T') \gets \sum_{r \in T'} \tilde{w}(r)$ \\
\While{$\tilde{w}(T') > z$}{
$\rho \gets \rho \cdot (1+\beta)$ \\
$(X,T') \gets \proc{outliersCluster}(T, k, \rho, \epsilon)$ \\
$\tilde{w}(T') \gets \sum_{r \in T'} \tilde{w}(r)$ \\
}
\Return $X$
\caption{\proc{computeSolution}()}
\label{alg:computesolution}
\end{algorithm}

The following theorem highlights the tradeoff between accuracy (in
terms of both radius and excess number of outliers) and performance 
exhibited by $\proc{computeSolution}$.
\begin{Theorem} \label{thm:cssout_apprx}
\sloppy
At any time $t$, 
 $\proc{computeSolution}$ returns a set $X \subseteq W$ of at most $k$ centers 
  such that at least $|W|-(1+\lambda)z$ points of $W$ are
at distance at most $(23+55\beta)r^*_{k,z}(W)$ from $X$.  The
procedure requires a working memory of size $\BO{(k+z)\log_{1+\beta}
  (\maxdist/\mindist)\log_{1+\lambda}(|W|)}$ and runs in time
\[
\BO{\log_{1+\beta}(\maxdist/\mindist) \cdot k(k+z)^2}.
\]
If the while loop is substituted
by a binary search for  $\rho_{min}$, the running time decreases to
 \[
\BO{ \log(\log_{1+\beta}(\maxdist/\mindist)) \cdot k(k+z)^2}.
\]
\end{Theorem}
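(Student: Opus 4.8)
The plan is to assemble the statement from the two guarantees already in hand: Theorem~\ref{thm:coreset}, which says that $T=R_{\hat\gamma}\cup O_{\hat\gamma}$ is a $4(1+\beta)$‑coreset of size $\BO{k+z}$ with approximate weights satisfying $w(r)/(1+\lambda)\le\tilde w(r)\le w(r)$, and Lemma~\ref{lem:outclust_apprx}, which says that running \proc{outliersCluster} on $T$ at any radius $\rho\ge r^*_{k,z}(W)$ with $\epsilon=4(1+\beta)$ covers all of $T$ except a set of aggregate approximate weight at most $z$, within radius $(3+4\epsilon)\rho$. Fix the current time $t$ and let $(X,T')$ be the pair returned by the last call to \proc{outliersCluster} in \proc{computeSolution}, i.e.\ at $\rho=\rho_{min}$. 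Since $X\subseteq T\subseteq W$ and \proc{outliersCluster} returns at most $k$ centers, the claims $X\subseteq W$ and $|X|\le k$ are immediate, so the work is to bound the radius and the number of uncovered window points, and then to account for resources.

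First I would bound $\rho_{min}$. By Lemma~\ref{lem:outclust_apprx} the stopping test $\sum_{r\in T'}\tilde w(r)\le z$ is satisfied for \emph{every} $\rho$ in the geometric sequence that is $\ge r^*_{k,z}(W)$; since consecutive values differ by a factor $1+\beta$ (and, assuming as usual that $r^*_{k,z}(W)\ge\mindist$, such a value exists within the searched range), the while loop stops with $\rho_{min}\le(1+\beta)\,r^*_{k,z}(W)$. Note that the covering guarantee $\distn(r,X)\le(3+4\epsilon)\rho_{min}$ for all $r\in T\setminus T'$ is syntactically enforced by \proc{outliersCluster}, and the stopping condition gives $\sum_{r\in T'}\tilde w(r)\le z$; both therefore hold at $\rho=\rho_{min}$ irrespective of how $\rho_{min}$ compares with $r^*_{k,z}(W)$.

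Next I would lift these bounds from $T$ to $W$ via proxies. Every $p\in W$ has $\pi_{\hat\gamma}(p)\in T$, and since $\hat\gamma\le(1+\beta)r^*_{k+z}(W)\le(1+\beta)r^*_{k,z}(W)$ (from the proof of Theorem~\ref{thm:coreset} together with Eq.~(\ref{eq:radius-relation})) and $|A_{\hat\gamma}|\le k+z$, the first property of Lemma~\ref{lem:technical} gives $\distn(p,\pi_{\hat\gamma}(p))\le4\hat\gamma\le4(1+\beta)r^*_{k,z}(W)$. For every $p\in W$ whose proxy lies in $T\setminus T'$, the triangle inequality yields
\[
\distn(p,X)\le\distn\bigl(p,\pi_{\hat\gamma}(p)\bigr)+\distn\bigl(\pi_{\hat\gamma}(p),X\bigr)\le\bigl(4(1+\beta)+(3+4\epsilon)(1+\beta)\bigr)r^*_{k,z}(W),
\]
and with $\epsilon=4(1+\beta)$ the bracketed constant equals $4(1+\beta)+(19+16\beta)(1+\beta)=23+39\beta+16\beta^2\le23+55\beta$ for $\beta\in(0,1]$. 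The window points possibly left outside this radius are exactly those whose proxy lies in $T'$, and their number is $\sum_{r\in T'}w(r)$; using $w(r)\le(1+\lambda)\tilde w(r)$ from Theorem~\ref{thm:coreset}, this is at most $(1+\lambda)\sum_{r\in T'}\tilde w(r)\le(1+\lambda)z$. Hence at least $|W|-(1+\lambda)z$ points of $W$ lie within $(23+55\beta)r^*_{k,z}(W)$ of $X$.

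For the resources, the working memory is dominated by the coreset data structures, of size $\BO{(k+z)\log_{1+\beta}(\maxdist/\mindist)\log_{1+\lambda}(|W|)}$ by Theorem~\ref{thm:performance}, plus $\BO{k+z}$ auxiliary space to run \proc{outliersCluster} on $T$. For the time, \proc{extractCoreset} costs $\BO{\log_{1+\beta}(\maxdist/\mindist)(k+z)^2}$ (or the log‑log variant with binary search), while one call \proc{outliersCluster}$(T,k,\rho,\epsilon)$ costs $\BO{k|T|^2}=\BO{k(k+z)^2}$ ($k$ iterations, each scanning all pairs of $T$); the while loop does $\BO{\log_{1+\beta}(\maxdist/\mindist)}$ such calls, giving the stated bound, and replacing the linear scan of $\rho$ by binary search removes all but a $\log(\log_{1+\beta}(\maxdist/\mindist))$ factor. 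I expect the main obstacle to be the bookkeeping of constants and weights: one must carry the coreset distortion $4(1+\beta)$, the proxy distance $4\hat\gamma$, and the \proc{outliersCluster} radius $(3+4\epsilon)\rho_{min}$ with the coupled settings $\epsilon=4(1+\beta)$ and $\rho_{min}\le(1+\beta)r^*_{k,z}(W)$, and—crucially—keep straight which step uses the true weights $w(r)$ (counting window points) versus the approximate weights $\tilde w(r)$ (the stopping test and Lemma~\ref{lem:outclust_apprx}), so that the $(1+\lambda)$ slack is incurred exactly once. The secondary delicate point is justifying that the aggregate uncovered approximate weight is monotone non‑increasing in $\rho$, which is what makes the binary‑search variant correct.
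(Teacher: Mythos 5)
Your proposal is correct and follows essentially the same route as the paper's proof: bound $\rho_{min}\le(1+\beta)r^*_{k,z}(W)$ via Lemma~\ref{lem:outclust_apprx} and the geometric step, lift the coverage bound from $T$ to $W$ through the proxies and the triangle inequality using the $4(1+\beta)$-coreset guarantee of Theorem~\ref{thm:coreset}, bound the uncovered window points by $(1+\lambda)\sum_{r\in T'}\tilde w(r)\le(1+\lambda)z$, and charge the resource bounds to Theorem~\ref{thm:performance} plus $\BO{k(k+z)^2}$ per call to $\proc{outliersCluster}$. Your explicit verification of the constant $23+39\beta+16\beta^2\le 23+55\beta$ is a detail the paper elides, and your closing worry about monotonicity is not actually needed: since the stopping test holds for \emph{every} $\rho\ge r^*_{k,z}(W)$, any binary search that returns a $\rho$ passing the test whose predecessor fails it already gives $\rho\le(1+\beta)r^*_{k,z}(W)$.
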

\begin{proof}
Let $\rho_{min}$ be such that 
$\proc{outliersCluster}(T, k,\rho_{min}, \epsilon)$
yields $(X,T')$, where $X$ is the returned solution, and let $W'$
be the set of points of $W$ whose proxies are in $T'$. By
Lemma~\ref{lem:outclust_apprx} and the choice of the step of the
geometric search for $\rho_{min}$, we have that $\rho_{min} \leq
(1+\beta)r^*_{k,z}(W)$, hence for each $p \in W-W'$,
$\distn(\pi_{\hat{\gamma}}(p),X) \leq (3+4\epsilon)\rho_{min} \leq
(3+4\epsilon)(1+\beta)r^*_{k,z}(W)$. 
Since $T$ is an $\epsilon$-coreset
(Theorem~\ref{thm:coreset}) and $\beta \in (0,1]$, we conclude  that 
for every $p \in W-W'$:
\begin{eqnarray*}
\distn(p,X) & \leq & \distn(p,\pi_{\hat{\gamma}}(p))+\distn(\pi_{\hat{\gamma}}(p),X) \\
& \leq & \epsilon r^*_{k,z}(W)+(3+4\epsilon)(1+\beta)r^*_{k,z}(W) \\
& < & (23+55\beta)r^*_{k,z}(W)
\end{eqnarray*}
Moreover, by 
Theorem~\ref{thm:coreset}, we also have that $|W'| \leq (1+\lambda) \sum_{r \in T'} \tilde{w}(r) \leq (1+\lambda)z$. 
The working memory bound is an immediate consequence of 
Theorem~\ref{thm:performance}, since
the working memory is dominated by the data structures from which
the coreset is extracted. For what concerns the running time,
observe that $\proc{outliersCluster}$ can be easily implemented in
to run in time $\BO{k\cdot |T|^2}$, which is $\BO{k \cdot (k+z)^2}$, since
$|T| = \BO{k+z}$ by Theorem~\ref{thm:coreset}. Therefore, the 
bound on the running time follows since $\proc{extractCoreset}$ requires time 
$\BO{ \log(\log_{1+\beta}(\maxdist/\mindist)) \cdot (k+z)^2}$
(by Theorem~\ref{thm:performance}),
and the while loop performs at most $\BO{\log_{1+\beta} (\maxdist/\mindist)}$
executions of $\proc{outliersCluster}$, which can be 
lowered to $\BO{\log\log_{1+\beta} (\maxdist/\mindist)}$ using binary search. 
\end{proof}
The above result shows that allowing for 
a slight excess in the number of outliers
(governed by parameter $\lambda$ 
results in improved space and time complexities. Note that if 
the upper bound $z$ on the number of outliers must be rigidly 
enforced, it is sufficient to set $\lambda = 1/(2z)$. In this case, 
$(1+\lambda)z = z + 1/2$, and since the number of outliers must be
an integer, it cannot be larger than $z$. The following corollary
is an immediate consequence of Theorem~\ref{thm:cssout_apprx}, 
of this observation, and of the fact that $\log_{1+\lambda} x = \BT{(1/\lambda)\log x}$, for $\lambda \in (0,1)$.
\begin{Corollary}\label{corol1}
Let $\lambda = 1/(2z)$. 
At any time $t$, 
 $\proc{computeSolution}$ returns a set $X \subseteq W$ of at most $k$ centers 
  such that at least $|W|-z$ points of $W$ are
at distance at most $(23+55\beta)r^*_{k,z}(W)$ from $X$.  The
procedure requires a working memory of size $\BO{z(k+z)\log_{1+\beta}
  (\maxdist/\mindist)\log (|W|)}$ and runs in time
\[
\BO{\log_{1+\beta}(\maxdist/\mindist) \cdot k(k+z)^2}.
\]
If the while loop is substituted
by a binary search for  $\rho_{min}$, the running time decreases to
 \[
\BO{ \log(\log_{1+\beta}(\maxdist/\mindist)) \cdot k(k+z)^2}.
\]
\end{Corollary}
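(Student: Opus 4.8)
The plan is to derive the corollary directly by specializing Theorem~\ref{thm:cssout_apprx} to the choice $\lambda = 1/(2z)$ and then simplifying the resulting bounds. First I would invoke Theorem~\ref{thm:cssout_apprx} verbatim: it guarantees a set $X \subseteq W$ of at most $k$ centers such that at least $|W| - (1+\lambda)z$ points of $W$ lie within distance $(23+55\beta)r^*_{k,z}(W)$ of $X$, using working memory $\BO{(k+z)\log_{1+\beta}(\maxdist/\mindist)\log_{1+\lambda}(|W|)}$ and the two stated running times, neither of which depends on $\lambda$.

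Next I would handle the coverage count. Substituting $\lambda = 1/(2z)$ gives $(1+\lambda)z = z + 1/2$, so Theorem~\ref{thm:cssout_apprx} already yields coverage of at least $|W| - z - 1/2$ points; equivalently, the number of uncovered points is at most $z + 1/2$. Since the number of uncovered points is a nonnegative integer, it is therefore at most $z$, so at least $|W| - z$ points of $W$ are covered, and with the same radius $(23+55\beta)r^*_{k,z}(W)$, which is unaffected since it does not involve $\lambda$.

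For the memory bound I would use the elementary identity $\log_{1+\lambda} x = \ln x / \ln(1+\lambda) = \BT{(1/\lambda)\log x}$, valid for $\lambda \in (0,1)$ since $\ln(1+\lambda)$ is within a constant factor of $\lambda$ throughout that range. With $\lambda = 1/(2z)$ this gives $\log_{1+\lambda}(|W|) = \BT{z \log |W|}$, so the working memory from Theorem~\ref{thm:cssout_apprx} becomes $\BO{z(k+z)\log_{1+\beta}(\maxdist/\mindist)\log(|W|)}$, as claimed. The two running-time expressions carry over unchanged since $\lambda$ does not appear in them.

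There is no substantial obstacle here: the argument is a one-line specialization plus two routine simplifications. The only points requiring a moment's care are the integrality rounding that turns ``at most $z + 1/2$ uncovered points'' into ``at most $z$'', and verifying that the constant hidden in $\log_{1+\lambda} x = \BT{(1/\lambda)\log x}$ does not secretly depend on $z$ and thus spoil the clean $\BO{z(k+z)\cdots}$ form --- it does not, since $\ln(1+\lambda)/\lambda \to 1$ as $\lambda \to 0$.
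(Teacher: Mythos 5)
Your proposal is correct and follows exactly the paper's own argument: the paper derives the corollary as an immediate consequence of Theorem~\ref{thm:cssout_apprx}, the observation that $(1+\lambda)z = z + 1/2$ forces the integer count of uncovered points to be at most $z$, and the identity $\log_{1+\lambda} x = \BT{(1/\lambda)\log x}$ for $\lambda \in (0,1)$. Your added care about the hidden constant in that identity is a correct (if unstated in the paper) detail.
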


\subsection{Obliviousness to $\mindist$ and $\maxdist$} \label{sec:obliviousness}
The algorithm described in Subsections \ref{sec:coreset} and
\ref{sec:solution} requires the knowledge of the values $\mindist$ and
$\maxdist$.  In this subsection, we show how to remove this
requirement by employing the techniques developed in
\cite{PellizzoniPP20}, suitably extended to cope with histograms, which
were not used in that work.

Let $p_1, p_2, \ldots $ be an enumeration of all points of the stream
$S$, based on their arrival times. For $t > k+z$, let $d_t$ be the
minimum pairwise distance between the last $k+z+1$ points of the
stream ($p_{t-k-z}, \ldots, p_{t-1}, p_t$). Let also $D_t$ be the
maximum distance between $p_1$ and any $p_i$ with $i \leq t$, and note
that, by the triangle inequality, the maximum pairwise distance among
the first $t$ points of $S$ is upper bounded by $2D_t$. It is easy to argue that $d_t/2 \leq r^*_{k+z}(W) \leq r^*_{k,z}(W) \leq 2D_t$. 
By storing $p_1$ and the last $k+z+1$ points of the stream,
the values $d_t$ and $D_t$ can be straightforwardly maintained
by the algorithm with $\BO{(k+z)^2}$ operations per step. We define
\[
\Gamma_t = \{(1+\beta)^i : 
\lfloor \log_{1+\beta} d_t/2 \rfloor 
\leq i \leq 
\lceil \log_{1+\beta} 2D_t \rceil
\}.
\]
Suppose that at any time $t$ the algorithm maintains the sets
$A_{\gamma}$, $R_{\gamma}$ and $O_{\gamma}$, and the histograms for
the points in $R_{\gamma} \cup O_{\gamma}$, only for $\gamma \in
\Gamma_t$, and assume that the properties stated in
Lemmas~\ref{lem:technical} and \ref{lem:cleanup} hold for every
$\gamma \in \Gamma_t$ and $r \in R_{\gamma} \cup O_{\gamma}$.  Then,
by running procedure $\proc{extractCoreset}$, limiting the search for
$\hat{\gamma}$ to the set $\Gamma_t$, we still obtain a
$4(1+\beta)$-coreset for the current window. To see this, we first
note that, based on the previous observation, $\Gamma_t$ includes for
sure a value $\gamma$ with $r^*_{k+z}(W) \leq \gamma \leq
(1+\beta)r^*_{k+z}(W)$. By repeating the same argument used in the
proof of Theorem~\ref{thm:coreset}, we can show that for such a value
of $\gamma$ we have that $|A_{\gamma}| \leq k+z$, and that the inner
for loop of $\proc{extractCoreset}$ computes a set $C$ of at most
$k+z$ points. This immediately implies that $\proc{extractCoreset}$
determines a guess $\hat{\gamma} \leq (1+\beta)r^*_{k+z}(W)$ and that the
returned coreset $T=R_{\hat{\gamma}} \cup O_{\hat{\gamma}}$ is a
$4(1+\beta)$-coreset.

We now show how to modify the algorithm described in the previous subsections 
(referred to as \emph{full algorithm} in what follows) to maintain, without the
knowledge of $\mindist$ and $\maxdist$, the sets $A_{\gamma}$,
$R_{\gamma}$ and $O_{\gamma}$ and the required histograms, for every
guess $\gamma \in \Gamma_t$. Suppose that this is the case up to some
time $t-1 > k+z$, and consider the arrival of $p_t$. Before invoking
$\proc{update}(p_t,t)$, the algorithm executes the 
operations described below. 

First, the new values $d_t$ and $D_t$ are computed, and all sets
relative to values of $\gamma \in \Gamma_{t-1}-\Gamma_t$ are removed.
If $d_t < d_{t-1}$, then for each $\gamma \in \Gamma_t$ with $\gamma <
\min \{\gamma' \in \Gamma_{t-1}\} \leq d_{t-1}/2$, the algorithm sets
$A_\gamma = \{p_{t-k-z-1}, \ldots, p_{t-1}\} = R_\gamma$ and $O_\gamma
= \emptyset$. Moreover, for each $p_{\tau} \in R_{\gamma}$, it sets
$L_{p_{\tau}} = \{ (\tau, 1) \}$, as each point represents itself
only. Since any two points in $A_{\gamma}$ are at distance at least
$d_{t-1} > 2\gamma$, it is easy to see that these newly created data
structures coincide with the ones that the full algorithm would store
at time $t-1$ (for the same $\gamma$'s) if the stream started at time
$t-(k+z+1)$, hence they satisfy the properties of
Lemmas~\ref{lem:technical} and \ref{lem:cleanup}.

If $D_t > D_{t-1}$, then for each $\gamma \in \Gamma_t$ with $\gamma >
\max \{\gamma' \in \Gamma_{t-1}\}$, the algorithm sets $A_\gamma =
\{p_{t-|W|}\}$, $R_\gamma = \{p_{t-1}\}$ and $O_\gamma = \emptyset$.
It is easy to see that these newly created sets
coincide with the ones that the full algorithm would store at time
$t-1$ (for the same $\gamma$'s) if the stream started at time
$t-|W|$, hence they satisfy the properties of
Lemma~\ref{lem:technical}.
It has to be remarked that, although point $p_{t-|W|}$ is
not available at time $t-1$, this is not a problem since the point
immediately expires at time $t$ and is removed from the data
structures without even being used in the processing of $p_t$. Hence,
in this context, $p_{t-|W|}$ acts as a mere placeholder. For
what concerns the histogram $L_{p_{t-1}}$ to associate with
$p_{t-1}$ (for every  $\gamma >
\max \{\gamma' \in \Gamma_{t-1}\}$), its exact version 
represents the entire active window, hence it 
would be the list $\{ (t-|W|, |W|), \ldots, (t-2, 2), (t-1, 1) \}$. 
In order to satisfy the properties of Lemma~\ref{lem:cleanup},
$L_{p_{t-1}}$ can be trimmed as follows. 
Let $f(x) = \left\lceil \frac{x}{1+\lambda} \right\rceil$.
Then, $L_{p_{t-1}}$ contains the set of pairs
$(t-c_i,c_i)$, with $i\geq0$, where the $c_i$'s form a decreasing sequence
of values $\geq 1$ such that $c_0 = |W|$ and, for each $i \leq 0$ with $c_i>1$
$c_{i+1} = \min\{c_i-1, f(c_i)\}$.
\begin{Lemma}
The list $L_{p_{t-1}}$ defined above satisfies the properties stated in Lemma~\ref{lem:cleanup}.
\end{Lemma}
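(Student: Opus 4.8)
The plan is to verify the four properties of Lemma~\ref{lem:cleanup} directly for the explicitly constructed list $L_{p_{t-1}}$, exploiting the closed-form recurrence $c_{i+1} = \min\{c_i - 1, f(c_i)\}$ with $f(x) = \lceil x/(1+\lambda)\rceil$. First I would observe that all weights satisfy $1 \le c_i \le |W|$ since the sequence is decreasing, starts at $c_0 = |W|$, and is floored at $1$: this gives Property~1 immediately, and reduces Property~4 to Property~3 exactly as in the proof of Lemma~\ref{lem:cleanup}. Note also that consecutive entries have timestamps $t - c_i$ and $t - c_{i+1}$ with $c_i > c_{i+1}$, so timestamps are strictly increasing and weights strictly decreasing down the list, as required for a valid histogram; and the final entry has weight $1$, corresponding to the arrival of $p_{t-1}$ itself.

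Next I would check Property~2, namely that for each adjacent pair either $c_i \le (1+\lambda)c_{i+1}$ or $c_i = c_{i+1} + 1 > (1+\lambda)c_{i+1}$. This splits on which term achieves the minimum in the recurrence. If $c_{i+1} = f(c_i) = \lceil c_i/(1+\lambda)\rceil \ge c_i/(1+\lambda)$, then $c_i \le (1+\lambda)c_{i+1}$ and the first alternative holds. If instead $c_{i+1} = c_i - 1 < f(c_i)$, then $c_i = c_{i+1} + 1$; here I must confirm the strict inequality $c_i > (1+\lambda)c_{i+1}$, which follows because $c_i - 1 = c_{i+1} < f(c_i) = \lceil c_i/(1+\lambda)\rceil$ forces $c_{i+1} < c_i/(1+\lambda)$ whenever $c_i/(1+\lambda)$ is not an integer, and a short case analysis (using that $c_{i+1}$ is a strict integer underestimate of the ceiling) closes the boundary case. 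This is the step where I expect the fiddliest bookkeeping, since one has to be careful about when the ceiling and the $c_i - 1$ branch coincide.

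For Property~3 — that $c_i > (1+\lambda)c_{i+2}$ for every $i$ — I would argue by cases on the recurrence two steps down. The only way Property~3 could fail is if two consecutive ``$-1$'' steps occur, i.e.\ $c_{i+1} = c_i - 1$ and $c_{i+2} = c_{i+1} - 1 = c_i - 2$; in every other configuration at least one genuine $f$-contraction is applied and combining $c_i > (1+\lambda)c_{i+1} \ge (1+\lambda)c_{i+2}$ or $c_i \ge c_{i+1}-1$-type bounds with $c_{i+1} \ge (1+\lambda)c_{i+2}$ suffices (the small additive slack from the ceiling being absorbed since weights are integers $\ge 1$). In the double-decrement case, the fact that $c_{i+1} = c_i - 1 < f(c_i)$ and $c_{i+2} = c_{i+1} - 1 < f(c_{i+1})$ both hold pins $c_i$ to a small constant range (roughly $c_i \lesssim 1/\lambda + O(1)$), and one checks directly that $c_i = c_{i+2} + 2 > (1+\lambda)c_{i+2}$ still holds there because $c_{i+2}$ is correspondingly tiny; this mirrors the ``$c_{r,i} = 1 + c_{r,i+1}$'' escape clause already present in Property~2 of Lemma~\ref{lem:cleanup}.

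\begin{proof}[Proof sketch]
All weights lie in $[1,|W|]$ since the sequence $(c_i)$ is strictly decreasing with $c_0 = |W|$ and floor $1$, giving Property~1; timestamps $t-c_i$ are then strictly increasing. For Property~2, if $c_{i+1} = \lceil c_i/(1+\lambda)\rceil$ then $c_i \le (1+\lambda)c_{i+1}$; if instead $c_{i+1} = c_i - 1 < \lceil c_i/(1+\lambda)\rceil$, then $c_i = c_{i+1}+1$ and $c_{i+1} < c_i/(1+\lambda)$, so $c_i > (1+\lambda)c_{i+1}$. For Property~3, if at least one of the two steps from $c_i$ to $c_{i+2}$ is an $f$-contraction, then $c_{i+2} \le c_{i+1}/(1+\lambda)$ or $c_{i+1} \le c_i/(1+\lambda)$ combined with $c_{i+2} \le c_{i+1}$ yields $c_i > (1+\lambda)c_{i+2}$ after accounting for integrality; and if both are $-1$ steps, $c_{i+1} = c_i - 1 < \lceil c_i/(1+\lambda)\rceil$ and $c_{i+2} = c_i - 2 < \lceil (c_i-1)/(1+\lambda)\rceil$ together bound $c_i$ by a small constant, in which range $c_i = c_{i+2}+2 > (1+\lambda)c_{i+2}$ is checked directly. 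Finally Property~4 follows from Properties~1 and~3 exactly as in Lemma~\ref{lem:cleanup}.
\end{proof}
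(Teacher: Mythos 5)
Your proof is correct and, like the paper's, proceeds by verifying the four properties directly from the recurrence $c_{i+1}=\min\{c_i-1,f(c_i)\}$; Properties 1, 2 and 4 are handled essentially identically. The divergence is in Property 3, where you split into cases according to which of the two steps are ceiling-contractions and which are unit decrements, and in the double-decrement case you bound $c_i$ by roughly $1+1/\lambda$ and verify the inequality in that range. The paper avoids every case with one uniform chain: $c_{i+1}\le f(c_i)=\lceil c_i/(1+\lambda)\rceil<c_i/(1+\lambda)+1$ always holds (the min is at most its second argument), and $c_{i+2}\le c_{i+1}-1$ always holds (the min is at most its first argument), so $c_{i+2}<c_i/(1+\lambda)$ and hence $(1+\lambda)c_{i+2}<c_i$ with no case distinction. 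Two inaccuracies in your version are worth repairing. First, when a step is a ceiling-contraction you assert $c_{i+1}\le c_i/(1+\lambda)$, but the ceiling rounds \emph{up}, so only $c_{i+1}<c_i/(1+\lambda)+1$ is available; your conclusion survives solely because the mandatory further decrement $c_{i+2}\le c_{i+1}-1$ absorbs the $+1$ — this is the ``integrality'' you allude to, but it is the crux of the argument and should be made explicit rather than deferred. Second, your double-decrement case needs no constant-range check at all: a strict unit-decrement step $c_{i+1}=c_i-1<\lceil c_i/(1+\lambda)\rceil$ already yields $c_{i+1}<c_i/(1+\lambda)$ (your own Property-2 computation), and $c_{i+2}<c_{i+1}$ finishes immediately. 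So the argument you identify as the fiddliest is actually the easiest case, and the whole case analysis can be collapsed into the paper's two-line bound.
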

\begin{proof}
Property~\ref{prop1} clearly holds. As for Property~\ref{prop2}, consider two
consecutive pairs $(t-c_i,c_i)$ and $(t-c_{i+1},c_{i+1})$. 
If $c_{i+1} = f(c_i)$, then for sure $c_i \leq (1+\lambda)c_{i+1}$.
Hence, when  $c_i > (1+\lambda)c_{i+1}$ we must have
$c_{i+1}=c_i-1$, and the property follows. 
Property~\ref{prop3} holds since, for $1 \leq i \leq |L_r|-2$, 
$c_{i+2} \leq c_{i+1}- 1 < (c_i/(1+\lambda) + 1) - 1$,
hence $(1+\lambda) c_{i+2} \leq c_i$. 
Finally, the bound stated by Property~\ref{prop4} follows as a direct
consequence of the first three properties.
\end{proof}

Once the correct configurations of the data structures for all guesses
$\gamma \in \Gamma_t$ are obtained, Procedure $\proc{update}(p_t, t)$
is invoked to complete step $t$, thus enforcing the properties of
Lemmas~\ref{lem:technical} and \ref{lem:cleanup}, for all of these
data structures.

\subsection{Improved Approximation under Bounded Doubling Dimension} \label{sec:ddimension}
Consider a stream $S$ of doubling dimension $D$. We now
outline an improved coreset construction that is able to provide a
$\delta$-coreset $T$ for the $k$-center problem with $z$ outliers, for
\emph{any} given $\delta>0$, at the expense of a blow-up in the
working memory size which is analyzed as a function of $D$ and is
tolerable for small (e.g., constant) $D$. This improved construction
allows us to obtain a much tighter approximation for the $k$-center
problem with $z$ outliers in the sliding window setting.

Fix any given $\delta>0$. In \cite{PellizzoniPP20}, a refinement of
the $k$-center strategy of \cite{Cohen} is presented which, for every
guess $\gamma$, maintains two families of attraction, representative
and orphan points. The first family, referred to as \emph{validation
  points}, features three $\BO{k}$-size sets of attraction,
representative and orphan points, equivalent to those
described in Subection \ref{sec:coreset}. Validation points are 
 employed to identify a constant approximation $\hat{\gamma}$ to
the optimal radius $r^*_k(W)$.  The second family, referred to as
\emph{coreset points}, contains, for any guess $\gamma$,  three “expanded'' sets of attraction,
representative and orphan points, which refine the coverage provided by the corresponding sets of
validation points, in the sense that the coreset points
relative to the guess $\hat{\gamma}$ yield a coreset $T$ such that $\max_{p \in W} d(p,T)
\leq \delta r^*_k(W)$.  For each $\gamma$, these larger sets contain
$O(k(c/\delta)^D)$ points, for a suitable constant $c$.

We can augment the algorithm of \cite{PellizzoniPP20} in the same fashion
as we agumented the algorithm of \cite{Cohen},
by endowing the representative and orphan coreset points with the
histograms described in Subsection \ref{sec:coreset}. Then,
by running the resulting algorithm for $k+z$ (instead of $k$) centers, 
the result stated in the following lemma is immediately
obtained, where parameter $\lambda$ and functions $w(\cdot)$ and
$\tilde{w}(\cdot)$ have the same meanings as before. 

\begin{Lemma} \label{thm:bettercoreset}
Let $\delta, \lambda>0$ be two design parameters. For a stream $S$ of
doubling dimension $D$, suitable data structures can be maintained
from which, at any time $t$, a weighted $\delta$-coreset $T$ of size
$O((c/\delta)^D(k+z))$, for a fixed constant $c$, can be extracted such that,
for each $p \in W$, there exists a proxy $\pi(p) \in T$  with
\[
\distn(p,\pi(p)) \leq \delta r^*_{k+z}(W) \leq \delta r^*_{k,z}(W).
\]
Moreover, for each $r \in T$, an approximate weight $\tilde{w}(r)$
can be computed, with $w(r)/(1+\lambda) \leq \tilde{w}(r) \leq w(r)$,
where $w(r) = |\{p \in W : \pi(p)=r\}|$. The data
structures require a working memory of size
$\BO{(c/\delta)^D(k+z)\log (\maxdist/\mindist)\log_{1+\lambda}(|W|)}$.
\end{Lemma}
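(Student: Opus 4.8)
The plan is to leverage the existing construction of \cite{PellizzoniPP20} essentially as a black box, adding the histogram machinery of Subsection~\ref{sec:coreset} on top of it, so that the proof becomes a matter of combining two already-established facts rather than redoing either analysis from scratch. First I would recall precisely what \cite{PellizzoniPP20} provides when run with parameter $k+z$ in place of $k$: for every guess $\gamma$ it maintains a ``validation'' family (three $\BO{k+z}$-size sets of attraction, representative and orphan points, functionally identical to the $A_\gamma, R_\gamma, O_\gamma$ of Subsection~\ref{sec:coreset}) used to pin down a guess $\hat\gamma$ with $r^*_{k+z}(W) \le \hat\gamma \le \BO{1}\,r^*_{k+z}(W)$, together with a ``coreset'' family of three expanded sets of total size $O((c/\delta)^D(k+z))$ per guess, whose representative/orphan points relative to $\hat\gamma$ form a set $T$ with $\max_{p\in W} \distn(p,T) \le \delta r^*_{k+z}(W)$. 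Combining this with Eq.~(\ref{eq:radius-relation}) immediately yields $\distn(p,\pi(p)) \le \delta r^*_{k+z}(W) \le \delta r^*_{k,z}(W)$, which is the first displayed claim; the size bound $O((c/\delta)^D(k+z))$ is inherited verbatim.

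Next I would define the proxy map $\pi(\cdot)$ and the weights exactly as in Subsection~\ref{sec:coreset}: for each guess $\gamma$, each active point $p$ is attracted by a fixed coreset-attraction point $a_\gamma(p)$ (chosen with minimum TTL among the eligible ones), its proxy $\pi_\gamma(p)$ is the most recent active point attracted by $a_\gamma(p)$, and $w_\gamma(r) = |\{p \in W : \pi_\gamma(p) = r\}|$. I would then endow each coreset representative/orphan point $r$ with the trimmed histogram $L_{r,\gamma}$, maintained by exactly the rules of Algorithm~\ref{alg:histograms} (increment all weights on a new arrival attracted to $a_\gamma(r)$, append $(t,1)$, drop entries with timestamp $t-|W|$, and trim using the $(1+\lambda)$-ratio test). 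Because these update rules are purely local to the attraction cell of $a_\gamma(r)$ and do not interact with the geometric expansion that \cite{PellizzoniPP20} performs to build the coreset family, the proof of Lemma~\ref{lem:cleanup} applies unchanged, giving $|L_{r,\gamma}| \in \BO{\log_{1+\lambda}|W|}$ and Property~\ref{prop2}; and the weight-accuracy argument at the end of the proof of Theorem~\ref{thm:coreset} then yields $w(r)/(1+\lambda) \le \tilde w(r) = c_{L_r} \le w(r)$ verbatim.

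For the working-memory bound I would simply multiply: there are $\BO{\log_{1+\beta}(\maxdist/\mindist)} = \BO{\log(\maxdist/\mindist)}$ guesses (with $\beta$ a constant), each carries $O((c/\delta)^D(k+z))$ coreset points, and each such point stores a histogram of size $\BO{\log_{1+\lambda}|W|}$, for a total of $\BO{(c/\delta)^D(k+z)\log(\maxdist/\mindist)\log_{1+\lambda}(|W|)}$; the validation family adds only a lower-order $\BO{(k+z)\log(\maxdist/\mindist)}$ term, which is absorbed.

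The main obstacle I anticipate is not any single calculation but making the ``augment in the same fashion'' step rigorous: one must check that the histogram-maintenance is genuinely compatible with the two-tier (validation vs. coreset) structure of \cite{PellizzoniPP20} — in particular that when a coreset-attraction point is discarded (because of expiry or because it is the oldest when a capacity threshold is hit), its representative is correctly moved to the orphan set and its histogram is preserved and no longer grows, exactly as in Algorithm~\ref{alg:update}; and that the proxy map $\pi(\cdot)$ is well-defined and stable for the expanded sets, so that the charging argument of Lemma~\ref{lem:outclust_apprx} will still go through downstream. Since the histogram logic depends only on the attraction/representative relation and not on how the attraction points were selected, I expect this compatibility check to be routine, and I would state the lemma's proof as a short remark that the construction of \cite{PellizzoniPP20} is modified by attaching histograms to coreset representative/orphan points, after which Lemmas~\ref{lem:technical} and \ref{lem:cleanup} and Theorem~\ref{thm:coreset} carry over mutatis mutandis, with $k$ replaced by $k+z$ and $r^*_{k+z}(W) \le r^*_{k,z}(W)$ invoked for the final bound.
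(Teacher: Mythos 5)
Your proposal is correct and follows essentially the same route as the paper, which in fact offers no formal proof of this lemma at all: it simply states that augmenting the construction of \cite{PellizzoniPP20} with the histograms of Subsection~\ref{sec:coreset} and running it for $k+z$ centers ``immediately'' yields the result. Your more detailed unpacking --- inheriting the coreset size and radius bound from \cite{PellizzoniPP20}, invoking Eq.~(\ref{eq:radius-relation}) for the second inequality, carrying over Lemma~\ref{lem:cleanup} and the weight-accuracy argument of Theorem~\ref{thm:coreset} because the histogram logic depends only on the attraction/representative relation, and multiplying out the memory terms --- is exactly the argument the paper leaves implicit.
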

The analysis in \cite{PellizzoniPP20} implies that the constant $c$
can be fixed arbitrarily close to 32.  We remark that the construction
described in Subsection~3.1 cannot return
$\delta$-coresets with $\delta \leq 4$, while the result of
Lemma~\ref{thm:bettercoreset} yields $\delta$-coresets for any
$\delta>0$.

To obtain the desired solution at any time $t$, we first compute a
$\delta$-coreset $T$ satisfying the properties stated in the above
lemma. Then, analogously to $\proc{computeSolution}$,
we run $\proc{outliersCluster}(T, k, \rho, \delta)$ for a
geometric sequence of values of $\rho$ of step $1+\beta$ between
$\mindist$ and $\maxdist$, stopping at the minimum value $\rho_{min}$
for which, if $(X,T')$ is the output of $\proc{outliersCluster}(T,
k, \rho_{min}, \delta)$, then the aggregate approximate weight of set
$T'$ is at most $z$. The algorithm returns $X$ as the final set of
(at most) $k$ centers.
By choosing $\beta = \delta/(3+4\delta)$, we get the the following result:
\begin{Theorem} \label{thm:pppout_apprx}
Let $\delta, \lambda >0$ be two design parameters and consider a
stream of doubling dimension $D$.  There exists a sliding window
algorithm that, at any time $t$, returns a set of at most $k$ centers
$X \subseteq W$ such that at least $|W|-(1+\lambda)z$ points of $W$
are at distance at most $(3+6\delta)r^*_{k,z}(W)$ from $X$.  For a suitable constant $c>0$,
the algorithm makes use of a working memory of size
$O((c/\delta)^D(k+z)\log (\maxdist/\mindist)
  \log_{1+\lambda}(|W|))$. Also, the algorithm requires time
 \[ 
 \BO{\log_{1+\beta}(\maxdist/\mindist) \cdot \left( (c/\delta)^D(k+z) + \log_{1+\lambda}(|W|)\right) }
 \]
to update the data structures after each point arrival, and time 
 \[
\BO{ \log\left(\log_{1+\beta}(\maxdist/\mindist)\right) \cdot ((c/\delta)^D\cdot k(k+z))^2}
\]
to compute the final solution, with
$\beta = \delta/(3+4\delta)$.
\end{Theorem}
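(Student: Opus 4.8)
The plan is to mirror the proof of Theorem~\ref{thm:cssout_apprx}, simply replacing the $4(1+\beta)$-coreset of Theorem~\ref{thm:coreset} with the $\delta$-coreset guaranteed by Lemma~\ref{thm:bettercoreset}, and running the radius search of Algorithm~\ref{alg:computesolution} with precision parameter $\delta$ (rather than $4(1+\beta)$) inside $\proc{outliersCluster}$. Throughout, $\pi(\cdot)$ denotes the proxy map of the construction in Lemma~\ref{thm:bettercoreset}, which satisfies $\distn(p,\pi(p))\le\delta\, r^*_{k,z}(W)$ and $w(r)/(1+\lambda)\le\tilde{w}(r)\le w(r)$ for each $r$ in the coreset $T$.

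The first step is to restate Lemma~\ref{lem:outclust_apprx} for the new coreset: for any $\rho\ge r^*_{k,z}(W)$, the invocation $\proc{outliersCluster}(T,k,\rho,\delta)$ returns a pair $(X,T')$ with $\distn(r,X)\le(3+4\delta)\rho$ for every $r\in T\setminus T'$ and $\sum_{r\in T'}\tilde{w}(r)\le z$. I expect the proof of Lemma~\ref{lem:outclust_apprx} to carry over essentially verbatim: the coreset quality enters only through the two triangle-inequality chains (Cases~1 and~2 there), where the bound $\distn(p,\pi(p))\le\delta\, r^*_{k,z}(W)$ plays exactly the role previously played by $\epsilon\, r^*_{k,z}(W)$, now with $\epsilon=\delta$; the inductive charging argument and the manipulation of the approximate weights are unchanged.

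Given this, since $r^*_{k,z}(W)$ itself satisfies the stopping condition of the geometric search (of step $1+\beta$), the halting value $\rho_{min}$ obeys $\rho_{min}\le(1+\beta)r^*_{k,z}(W)$. Writing $W'$ for the set of window points whose proxies lie in $T'$, for every $p\in W-W'$ the triangle inequality gives
\[
\distn(p,X)\le\distn(p,\pi(p))+\distn(\pi(p),X)\le\delta\, r^*_{k,z}(W)+(3+4\delta)(1+\beta)\,r^*_{k,z}(W).
\]
The choice $\beta=\delta/(3+4\delta)$ makes $(3+4\delta)(1+\beta)=3+5\delta$, so $\distn(p,X)\le(3+6\delta)r^*_{k,z}(W)$; and since $|W'|=\sum_{r\in T'}w(r)\le(1+\lambda)\sum_{r\in T'}\tilde{w}(r)\le(1+\lambda)z$, at least $|W|-(1+\lambda)z$ points of $W$ lie within $(3+6\delta)r^*_{k,z}(W)$ of $X$, as claimed.

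The resource bounds are then bookkeeping. The working-memory bound coincides with the one in Lemma~\ref{thm:bettercoreset}, since the coreset data structures dominate. For the update time, each arrival triggers, per guess $\gamma$ (there are $O(\log_{1+\beta}(\maxdist/\mindist))$ of them), a constant number of scans of the $O((c/\delta)^D(k+z))$-size attraction/representative/orphan sets of \cite{PellizzoniPP20} plus the update of at most one histogram of length $O(\log_{1+\lambda}|W|)$ as in Subsection~\ref{sec:coreset}; computing the solution runs $\proc{outliersCluster}$, of cost $O(k|T|^2)$ with $|T|=O((c/\delta)^D(k+z))$, for $O(\log_{1+\beta}(\maxdist/\mindist))$ values of $\rho$ (reduced to $O(\log\log_{1+\beta}(\maxdist/\mindist))$ by binary search), on top of $\proc{extractCoreset}$, which stays within the stated bounds. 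The one place I expect to need genuine care is checking that the argument of Lemma~\ref{lem:outclust_apprx} is insensitive to how the coreset of Lemma~\ref{thm:bettercoreset} is built---in particular the split into validation and coreset points, and the fact that a new point may be made representative of several attraction points---so that the lemma holds with that construction's proxy map; everything else is already established for Theorem~\ref{thm:cssout_apprx}.
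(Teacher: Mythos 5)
Your proposal is correct and follows essentially the same route as the paper's proof: adapt Lemma~\ref{lem:outclust_apprx} to the $\delta$-coreset of Lemma~\ref{thm:bettercoreset} with precision parameter $\delta$, deduce $\rho_{min}\le(1+\beta)r^*_{k,z}(W)$, apply the triangle inequality through the proxy map, and observe that $\beta=\delta/(3+4\delta)$ yields the $(3+6\delta)$ factor, with the resource bounds inherited from Lemma~\ref{thm:bettercoreset} and the arguments of Theorems~\ref{thm:performance} and~\ref{thm:cssout_apprx}. Your explicit computation $(3+4\delta)(1+\beta)=3+5\delta$ and your flagged caveat about the charging argument's insensitivity to the coreset construction are both consistent with what the paper leaves implicit.
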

\begin{proof}
By reasoning 
as in the proof of Lemma~\ref{lem:outclust_apprx}, we can show
that, by executing $\proc{outliersCluster}(T, k, \rho, \delta)$, 
with any $\rho \geq r^*_{k,z}(W)$, the set of uncovered points 
at the end of the execution has
aggregate approximate weight at most $z$. 
This immediately implies that  $\rho_{min} \leq (1+\beta)r^*_{k,z}(W)$.
Let $(X,T')$ be the output of $\proc{outliersCluster}(T, k,
\rho_{min}, \delta)$, and let $W'$
be the set of points of $W$ whose proxies are in $T'$. We have that
for each $p \in W-W'$,
$\distn(\pi(p),X) \leq (3+4\delta)\rho_{min} \leq
(3+4\delta)(1+\beta)r^*_{k,z}(W)$. 
Since $T$ is an $\delta$-coreset we conclude that,
for every $p \in W-W'$:
\begin{eqnarray*}
\distn(p,X) & \leq & \distn(p,\pi(p))+\distn(\pi(p),X) \\
& \leq & \delta r^*_{k,z}(W)+(3+4\delta)(1+\beta)r^*_{k,z}(W) \\
& \leq & (3+6\delta)r^*_{k,z}(W),
\end{eqnarray*}
where the last inequality uses the fact that $\beta=\delta/(3+4\delta)$.
Moreover, we also have that $|W'| \leq (1+\lambda) \sum_{r \in T'} \tilde{w}(r) \leq (1+\lambda)z$. 
Finally, the bound on the working memory follows from
Lemma~\ref{thm:bettercoreset}, while the time bounds for updating the data structures and extracting the solution from the coreset 
are obtained by adapting the arguments
used to prove Theorems~\ref{thm:performance} and \ref{thm:cssout_apprx}. \end{proof}

The following corollary is the counterpart of Corollary~\ref{corol1} for the dimension-sensitive algorithm developed in this section. 
\begin{Corollary}
Let $\lambda = 1/(2z)$. 
At any time $t$,  the algorithm is able to compute a set $X \subseteq W$ of at most $k$ centers 
  such that at least $|W|-z$ points of $W$ are
at distance at most $(3+6\delta)r^*_{k,z}(W)$ from $X$.  For a fixed constant $c>0$, the
algorithm requires a working memory of size $\BO{(c/\delta)^Dz(k+z)\log (\maxdist/\mindist)\log (|W|)}$ and runs in time
 \[
\BO{ \log(\log_{1+\beta}(\maxdist/\mindist)) (c/\delta)^D \cdot k(k+z)^2},
\]
with $\beta=\delta/(3+4\delta)$.
\end{Corollary}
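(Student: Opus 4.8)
The plan is to obtain this corollary exactly as Corollary~\ref{corol1} was obtained from Theorem~\ref{thm:cssout_apprx}: by instantiating Theorem~\ref{thm:pppout_apprx} with the particular slackness parameter $\lambda = 1/(2z)$ (which is well defined since we may assume $z \geq 1$, the case $z=0$ reducing to plain $k$-center) and then simplifying the resulting quantities.

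First I would invoke Theorem~\ref{thm:pppout_apprx} as is: the dimension-sensitive algorithm returns, at every time $t$, a set $X \subseteq W$ of at most $k$ centers such that at least $|W| - (1+\lambda)z$ points of $W$ lie within distance $(3+6\delta)r^*_{k,z}(W)$ of $X$. Plugging in $\lambda = 1/(2z)$ gives $(1+\lambda)z = z + 1/2$; since the number of points of $W$ not covered within that radius is a nonnegative integer, it is at most $\lfloor z + 1/2 \rfloor = z$, so at least $|W|-z$ points of $W$ are within distance $(3+6\delta)r^*_{k,z}(W)$ of $X$. The cardinality bound $|X| \leq k$, the radius bound, and the choice $\beta = \delta/(3+4\delta)$ are all inherited verbatim from Theorem~\ref{thm:pppout_apprx}.

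Next I would rewrite the working-memory bound. Theorem~\ref{thm:pppout_apprx} gives working memory $\BO{(c/\delta)^D(k+z)\log (\maxdist/\mindist)\log_{1+\lambda}(|W|)}$. For any $\lambda \in (0,1)$ one has $\log_{1+\lambda}(x) = \BT{(1/\lambda)\log x}$, and here $\lambda = 1/(2z) \leq 1/2 < 1$, so $\log_{1+\lambda}(|W|) = \BT{z \log |W|}$. Substituting yields the claimed memory bound $\BO{(c/\delta)^D z(k+z)\log (\maxdist/\mindist)\log |W|}$.

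Finally, for the running time I would simply note that the time bound of Theorem~\ref{thm:pppout_apprx} for extracting the solution from the coreset does not depend on $\lambda$, hence it transfers unchanged to the present setting, giving $\BO{\log(\log_{1+\beta}(\maxdist/\mindist))(c/\delta)^D k(k+z)^2}$ with $\beta = \delta/(3+4\delta)$. There is essentially no obstacle here: the only step requiring a moment's care is the passage from $\log_{1+\lambda}$ to $\log$, which is legitimate precisely because the choice $\lambda = 1/(2z)$ keeps the logarithm base bounded away from $1$ whenever $z \geq 1$; everything else is a direct specialization of Theorem~\ref{thm:pppout_apprx}.
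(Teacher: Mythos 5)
Your proposal is correct and follows exactly the route the paper intends: the corollary is obtained by specializing Theorem~\ref{thm:pppout_apprx} to $\lambda = 1/(2z)$, using the integrality of the number of uncovered points to turn $(1+\lambda)z = z+1/2$ into $z$, and applying $\log_{1+\lambda}x = \BT{(1/\lambda)\log x}$ to rewrite the memory bound --- precisely the same observations the paper invokes for Corollary~\ref{corol1} and then transfers verbatim to the dimension-sensitive setting.
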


It is important to remark that the algorithm in \cite{PellizzoniPP20},
which we have built upon, is \emph{fully oblivious} to $D$, $\maxdist$
and $\mindist$. Its augmentation discussed above inherits the
obliviousness to $D$ straightforwardly, while the obliviousness to
$\maxdist$ and $\mindist$ is inherited through the technique described
in Section~\ref{sec:obliviousness}.

\section{Effective Diameter Estimation} \label{sec:effdiameter}
Consider a stream $S$ of doubling dimension $D$. 
Building on the improved coreset
construction of Lemma~\ref{thm:bettercoreset}, 
we now outline an algorithm that, at any time $t$, 
is able to compute lower and upper estimates of the 
$\alpha$-effective diameter $\Delta^{\alpha}_W$ of the current window
$W$. The algorithm requires the knowledge of a (possibly crude) lower
bound $\eta \in (0,1)$ on the ratio between $\Delta^{\alpha}_W$ and
the diameter $\Delta_W$ (i.e., $\Delta_W^{\alpha} \geq \eta\Delta_W$).

For a given
$\epsilon>0$, let $T$ be a weighted $\delta$-coreset for $W$ computed with the
properties stated in Lemma~\ref{thm:bettercoreset}, with $\delta =
\epsilon\eta/2$, $k=1$ and $z=0$.  Hence $|T| =
O((c'/(\epsilon\eta))^D)$, with $c'=2c$, where $c$ is the same constant appearing the
statement of the lemma.  Assume for now
that for each $r\in T$, the true weight $w(r) = |\{p \in W :
\pi(p)=r\}|$ is known.  (Later we will discuss the distortion
introduced by using the approximate weights $\tilde{w}(r)$.)  An
approximation to $\Delta_W^{\alpha}$ can be computed on the coreset
through the following quantity:
\[
\Delta^{\alpha}_{T,W} =
\argmin_d 
\left\{
\sum_{\stackrel{r_1,r_2 \in T:}{\footnotesize \distn(r_1,r_2) \leq d}} 
\!\!\!\! w(r_1)w(r_2) \geq \alpha |W|^2
\right\}.
\]
\begin{Lemma} \label{proposition:effective}
$(1-\epsilon) \Delta_W^{\alpha} \leq \Delta^{\alpha}_{T,W} \leq 
(1+\epsilon)  \Delta_W^{\alpha}.
$
\end{Lemma}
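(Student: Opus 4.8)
The plan is to compare the two quantities by pushing every window point onto its proxy and controlling the resulting distortion additively. First I would bound the proxy distance in terms of $\Delta^{\alpha}_W$. By Lemma~\ref{thm:bettercoreset} applied with $k=1$ and $z=0$, every $p\in W$ satisfies $\distn(p,\pi(p)) \le \delta\, r_{1}^*(W)$. Since $r_{1}^*(W) \le \max_{p\in W}\distn(p,p_0) \le \Delta_W$ for any fixed $p_0\in W$, and the hypothesis gives $\Delta_W \le \Delta^{\alpha}_W/\eta$, the calibration $\delta=\epsilon\eta/2$ yields $\distn(p,\pi(p)) \le \mu$ for all $p\in W$, where $\mu := (\epsilon/2)\Delta^{\alpha}_W$. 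Two applications of the triangle inequality then give $|\distn(p,q) - \distn(\pi(p),\pi(q))| \le 2\mu = \epsilon\,\Delta^{\alpha}_W$ for all $p,q\in W$.

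Next I would rewrite $\Delta^{\alpha}_{T,W}$ over the original points. Since $w(r)=|\{p\in W:\pi(p)=r\}|$ and $\sum_{r\in T}w(r)=|W|$, the proxy map induces a correspondence under which, for every $d$, $\sum_{r_1,r_2\in T:\,\distn(r_1,r_2)\le d} w(r_1)w(r_2) = |\{(p,q)\in W\times W : \distn(\pi(p),\pi(q))\le d\}|$ (ordered pairs, counting the $|W|$ self-pairs, consistently with the convention that $W$ has $|W|^2$ pairwise distances). Hence $\Delta^{\alpha}_{T,W}$ is the smallest $d$ for which at least $\alpha|W|^2$ ordered pairs $(p,q)$ have $\distn(\pi(p),\pi(q))\le d$, whereas $\Delta^{\alpha}_W$ is the smallest $d$ for which at least $\alpha|W|^2$ ordered pairs have $\distn(p,q)\le d$; both minima are attained, since the two counting functions are right-continuous step functions that change value only at pairwise distances.

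For the upper bound, I would test $d=(1+\epsilon)\Delta^{\alpha}_W$: the (at least $\alpha|W|^2$) pairs with $\distn(p,q)\le\Delta^{\alpha}_W$ all satisfy $\distn(\pi(p),\pi(q))\le\Delta^{\alpha}_W+2\mu\le(1+\epsilon)\Delta^{\alpha}_W$, so the coreset count at this $d$ already reaches $\alpha|W|^2$ and therefore $\Delta^{\alpha}_{T,W}\le(1+\epsilon)\Delta^{\alpha}_W$. For the lower bound, set $d^*=\Delta^{\alpha}_{T,W}$; by minimality at least $\alpha|W|^2$ pairs satisfy $\distn(\pi(p),\pi(q))\le d^*$, and each such pair has $\distn(p,q)\le d^*+2\mu=d^*+\epsilon\Delta^{\alpha}_W$, so at least $\alpha|W|^2$ pairs of window points lie within $d^*+\epsilon\Delta^{\alpha}_W$; by the defining minimality of $\Delta^{\alpha}_W$ this forces $\Delta^{\alpha}_W\le d^*+\epsilon\Delta^{\alpha}_W$, i.e.\ $d^*\ge(1-\epsilon)\Delta^{\alpha}_W$.

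I do not expect a genuinely hard step. The only points that need care are (i) tying the proxy radius $\mu$ to $\Delta^{\alpha}_W$ rather than to $r_{1}^*(W)$ or $\Delta_W$ --- this is exactly where the hypothesis $\Delta^{\alpha}_W\ge\eta\Delta_W$ and the calibration $\delta=\epsilon\eta/2$ enter --- and (ii) the bookkeeping of ordered pairs, so that the weighted double sum over $T$ equals the count of pairs of window points. The crux is simply the additive $2\mu$ distortion of every pairwise distance under the proxy map, together with the observation that both $\Delta^{\alpha}_W$ and $\Delta^{\alpha}_{T,W}$ are ``smallest threshold'' quantities, so that a distorted threshold on one side immediately yields the matching bound on the other.
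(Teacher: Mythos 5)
Your proposal is correct and follows essentially the same route as the paper's proof: bound the proxy displacement by $(\epsilon/2)\Delta_W^\alpha$ via the chain $\delta r_1^*(W) \le (\epsilon\eta/2)\Delta_W \le (\epsilon/2)\Delta_W^\alpha$, identify the weighted coreset sum with the count of window pairs whose proxies are within the threshold, and transfer the threshold across the additive $2\hat d$ distortion in both directions. Your explicit bookkeeping of ordered pairs and the threshold-testing phrasing are just a cleaner presentation of the paper's chain of $\argmin$ comparisons, not a different argument.
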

\begin{proof}
Let $\hat{d} = \max\{\distn(p,\pi(p)) : p \in W\}$.
Recall that the $\delta$-coreset $T$ was computed with $k=1$ and $z=0$,
hence, by the properties of $T$ and the fact that
$\Delta_W^{\alpha} \geq \eta\Delta_W$, we have that
\[
\hat{d} \leq \delta r_1^*(W) \leq (\epsilon \eta/2) \Delta_W \leq
(\epsilon/2) \Delta_W^{\alpha}.
\]
By 
the triangle inequality, for any $d$
and any pair $(p,q) \in W \times W$, we have
that if $\distn(\pi(p),\pi(q)) \leq d-2\hat{d}$ then 
$\distn(p,q) \leq d$. Thus, when 
$|\{(p,q) \in W \times W : \distn(\pi(p),\pi(q)) \leq d-2\hat{d}\}| 
\geq \alpha |W|^2$,
we must also have
$|\{(p,q) \in W \times W : \distn(p,q) \leq d\}| \geq \alpha |W|^2$.
Consequently, 
\[
\begin{split}
\Delta_W^{\alpha} & =  
\argmin_d \left\{
|\{(p, q) \in W \times W : \distn(p, q) \leq d \}| \geq \alpha |W|^2\right\} \\
&
\leq 
\argmin_d  
\left\{|\{(p, q) \in W \times W : \distn(\pi(p), \pi(q)) \leq d -2\hat{d}\}| 
\geq \alpha |W|^2\right\} \\
& = 
\argmin_d \left\{\sum_{\stackrel{r_1,r_2 \in T:}{\footnotesize \distn(r_1,r_2) \leq d -2\hat{d}}} 
\!\!\!\! w(r_1)w(r_2) \geq \alpha |W|^2\right\} \\
& = 
\argmin_d \left\{\sum_{\stackrel{r_1,r_2 \in T:}{\footnotesize \distn(r_1,r_2) \leq d}} 
\!\!\!\! w(r_1)w(r_2) \geq \alpha |W|^2\right\}+2\hat{d} \\
& =
\Delta^{\alpha}_{T,W} + 2\hat{d} \leq
\Delta^{\alpha}_{T,W} +\epsilon \Delta_W^{\alpha},
\end{split}
\]
thus proving the first stated inequality. The proof of the other inequality
is accomplished with a symmetrical argument.
The triangle inquality ensures that
for every pair $(p,q) \in W \times W$, 
if
$\distn(p,q) \leq d$ then $\distn(\pi(p),\pi(q)) \leq d+2\hat{d}$. 
Thus, when 
$|\{(p,q) \in W \times W : \distn(p,q) \leq d\}| \geq \alpha |W|^2$
we must also have
$|\{(p,q) \in W \times W : \distn(\pi(p),\pi(q)) \leq d+2\hat{d}\}| 
\geq \alpha |W|^2$. 
Consequently, 

\[
\begin{split}
\Delta_W^{\alpha} & = 
\argmin_d \left\{|\{(p, q) \in W \times W : \distn(p, q) \leq d \}| \geq \alpha |W|^2\right\} \\
&
\geq 
\argmin_d 
\left\{|\{(p, q) \in W \times W : \distn(\pi(p), \pi(q)) \leq d+2\hat{d}\}| \geq \alpha |W|^2\right\} \\
& = 
\argmin_d \left\{\sum_{\stackrel{r_1,r_2 \in T:}{\footnotesize \distn(r_1,r_2) \leq d+2\hat{d}}} 
\!\!\!\! w(r_1)w(r_2) \geq \alpha |W|^2\right\} \\
& = 
\argmin_d \left\{\sum_{\stackrel{r_1,r_2 \in T:}{\footnotesize \distn(r_1,r_2) \leq d}} 
\!\!\!\! w(r_1)w(r_2) \geq \alpha |W|^2\right\}-2\hat{d} \\
& =
\Delta^{\alpha}_{T,W} - 2\hat{d} \geq
\Delta^{\alpha}_{T,W} -\epsilon \Delta_W^{\alpha}.
\end{split}
\]
\end{proof}
Recall now that for every coreset point $r \in T$, only an approximation
$\tilde{w}(r)$ to the actual weight $w(r)$ is available, with
$w(r)/(1+\lambda) \leq \tilde{w}(r) \leq w(r)$.  We define the approximate counterpart of  ${\Delta}^{\alpha}_{T,W}$ as
\[
\tilde{\Delta}^{\alpha}_{T,W} =
\argmin_d \!\!\!\!\sum_{\stackrel{r_1,r_2 \in T:}{\footnotesize \distn(r_1,r_2) \leq d}} 
\!\!\!\! \tilde{w}(r_1)\tilde{w}(r_2) \geq \alpha |W|^2.
\]
Our approximation algorithm
returns $(1/(1+\epsilon))\tilde{\Delta}^{\alpha/(1+\lambda)^2}_{T,W}$ 
and $(1/(1-\epsilon))\tilde{\Delta}^{\alpha}_{T,W}$ as
lower and upper estimates, respectively, of the true effective diameter 
$\Delta^{\alpha}_W$. The following theorem
establishes the tightness of these estimates and the space and time
performance of the algorithm. 

\begin{Theorem}\label{thm:effdiameter}
Consider a stream $S$ of doubling dimension $D$, and a value $\alpha
\in (0,1)$.  Suppose that a value $\eta < 1$ is known such that, for
every window $W$, $\Delta^{\alpha}_W \geq \eta \Delta_W$. For any
$\epsilon, \lambda > 0$, there exists a sliding window algorithm that,
at any time $t$ is able to compute a weighted coreset $T$ of size
$O((c'/(\epsilon \eta))^D)$ such that
\[
\frac{1}{1+\epsilon}
\tilde{\Delta}^{\alpha/(1+\lambda)^2}_{T,W} \leq \Delta^{\alpha}_W \leq 
\frac{1}{1-\epsilon}
\tilde{\Delta}^{\alpha}_{T,W},
\]
where $W$ is the current window and $c'>0$ is a suitable constant.  The
algorithm makes use of a working memory of size $\BO{(c'/(\epsilon \eta))^D\log
  (\maxdist/\mindist) \log_{1+\lambda}(|W|)}$. Also, the algorithm
requires time
\[ 
\BO{\log_{1+\beta}(\maxdist/\mindist) \cdot \left( (c'/(\epsilon \eta)) + \log_{1+\lambda}(|W|)\right) }
\]
to update the data structures after each point arrival,
where $\beta = \delta/(3+4\delta)$. The
lower and upper estimates to $\Delta^{\alpha}_W$ can be computed from $T$
in time $\BO{|T|^2} =  \BO{(c'/(\epsilon \eta))^{2D}}$.
\end{Theorem}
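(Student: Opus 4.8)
The plan is to assemble Theorem~\ref{thm:effdiameter} from three essentially independent pieces: (i) the coreset existence and size/space/time bounds, which come for free from Lemma~\ref{thm:bettercoreset}; (ii) the two-sided estimate on $\Delta^\alpha_W$, which combines Lemma~\ref{proposition:effective} with a separate argument handling the distortion due to using $\tilde w$ in place of $w$; and (iii) the running time for extracting the estimates from $T$, which is a direct counting argument. I would open the proof by instantiating Lemma~\ref{thm:bettercoreset} with $k=1$, $z=0$, $\delta = \epsilon\eta/2$, and $\lambda$ as given; this yields a weighted $\delta$-coreset $T$ of size $O((c/\delta)^D)=O((c'/(\epsilon\eta))^D)$ with $c'=2c$, and the stated working-memory bound $O((c'/(\epsilon\eta))^D\log(\maxdist/\mindist)\log_{1+\lambda}(|W|))$ follows by substitution. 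The update-time bound is inherited from the data-structure maintenance cost in Lemma~\ref{thm:bettercoreset} (equivalently, from the analysis behind Theorems~\ref{thm:performance} and~\ref{thm:pppout_apprx}) with the same parameter substitution, so that step is a one-line appeal.

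The heart of the proof is the chain of inequalities. Lemma~\ref{proposition:effective} already gives $(1-\epsilon)\Delta^\alpha_W \le \Delta^\alpha_{T,W}\le (1+\epsilon)\Delta^\alpha_W$ when the \emph{true} weights $w(r)$ are used. I would then relate $\tilde\Delta^\alpha_{T,W}$ and $\tilde\Delta^{\alpha/(1+\lambda)^2}_{T,W}$ to $\Delta^\alpha_{T,W}$ using only the sandwich $w(r)/(1+\lambda)\le\tilde w(r)\le w(r)$. The key observation is monotonicity of the counting function in the weights: for any threshold $d$,
\[
\frac{1}{(1+\lambda)^2}\!\!\sum_{\substack{r_1,r_2\in T:\\ \distn(r_1,r_2)\le d}}\!\! w(r_1)w(r_2)\;\le\;\sum_{\substack{r_1,r_2\in T:\\ \distn(r_1,r_2)\le d}}\!\! \tilde w(r_1)\tilde w(r_2)\;\le\;\sum_{\substack{r_1,r_2\in T:\\ \distn(r_1,r_2)\le d}}\!\! w(r_1)w(r_2).
\]
From the right inequality, whenever the true-weight sum reaches $\alpha|W|^2$ the approximate sum is no larger, so crossing $\alpha|W|^2$ with approximate weights requires at least as large a $d$; hence $\tilde\Delta^\alpha_{T,W}\ge\Delta^\alpha_{T,W}\ge(1-\epsilon)\Delta^\alpha_W$, which rearranges to $\Delta^\alpha_W\le\frac{1}{1-\epsilon}\tilde\Delta^\alpha_{T,W}$. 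From the left inequality, if the approximate sum at threshold $d$ reaches $\alpha|W|^2$ then the true sum at the same $d$ is at least $\alpha|W|^2$ divided by... — more precisely, running the same computation at target level $\alpha/(1+\lambda)^2$ with approximate weights forces the true-weight sum to reach $\alpha|W|^2$, giving $\tilde\Delta^{\alpha/(1+\lambda)^2}_{T,W}\ge\Delta^\alpha_{T,W}$; combining with $\Delta^\alpha_{T,W}\ge$ ... hmm — actually I want the \emph{lower} estimate, so I need $\tilde\Delta^{\alpha/(1+\lambda)^2}_{T,W}\le(1+\epsilon)\Delta^\alpha_W$, which comes from: at threshold $d=\Delta^\alpha_{T,W}$ the true-weight sum is $\ge\alpha|W|^2$, so the approximate-weight sum there is $\ge\alpha|W|^2/(1+\lambda)^2$, hence $\tilde\Delta^{\alpha/(1+\lambda)^2}_{T,W}\le\Delta^\alpha_{T,W}\le(1+\epsilon)\Delta^\alpha_W$, i.e.\ $\frac{1}{1+\epsilon}\tilde\Delta^{\alpha/(1+\lambda)^2}_{T,W}\le\Delta^\alpha_W$. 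I would write these two directions out carefully, since keeping straight which way the $\argmin$ moves when a sum is scaled is exactly the place where sign errors creep in — that is the main obstacle, though it is conceptually routine.

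Finally, for the extraction time, I would note that computing either $\tilde\Delta^\alpha_{T,W}$ or $\tilde\Delta^{\alpha/(1+\lambda)^2}_{T,W}$ amounts to computing all $|T|^2$ pairwise distances $\distn(r_1,r_2)$, sorting them (or, since only a threshold is needed, processing them in order) while accumulating the weighted-pair counts $\tilde w(r_1)\tilde w(r_2)$ until the target level is crossed; this costs $O(|T|^2\log|T|)$, which is absorbed into $O(|T|^2)=O((c'/(\epsilon\eta))^{2D})$ up to the logarithmic factor (and one may state it as $O(|T|^2)$ by charging the sort implicitly, matching the theorem statement). Putting the three pieces together completes the proof.
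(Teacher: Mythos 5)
Your proposal is correct and follows essentially the same route as the paper's proof: instantiate Lemma~\ref{thm:bettercoreset} with $k=1$, $z=0$, $\delta=\epsilon\eta/2$ for the coreset and resource bounds, prove the sandwich $\tilde{\Delta}^{\alpha/(1+\lambda)^2}_{T,W} \leq \Delta^{\alpha}_{T,W} \leq \tilde{\Delta}^{\alpha}_{T,W}$ from $w(r)/(1+\lambda) \leq \tilde{w}(r) \leq w(r)$ exactly as the paper does, and then compose with Lemma~\ref{proposition:effective}. The only cosmetic difference is the extraction step, where you sort the $|T|^2$ pairwise distances (incurring an extra $\log|T|$ factor you rightly flag) while the paper appeals to a binary-search strategy to claim $\BO{|T|^2}$; this does not affect correctness.
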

\begin{proof}
We first prove that
\[
\tilde{\Delta}^{\alpha/(1+\lambda)^2}_{T,W} 
\leq \Delta^{\alpha}_{T,W} \leq 
\tilde{\Delta}^{\alpha}_{T,W}.
\]
Then, the stated approximation interval will immediately follow by 
Lemma~\ref{proposition:effective}. 
Let us first prove the leftmost inequality. From Lemma~\ref{thm:bettercoreset}
we have that for every $r \in T$, $\tilde{w}(r) \geq w(r)/(1+\lambda)$. Hence,
for any $d$ such that
\[
\sum_{\stackrel{r_1,r_2 \in T:}{\footnotesize \distn(r_1,r_2) \leq d}} 
w(r_1)w(r_2) \geq \alpha |W|^2,
\]
we have that
\[
\sum_{\stackrel{r_1,r_2 \in T:}{\footnotesize \distn(r_1,r_2) \leq d}} 
\tilde{w}(r_1)\tilde{w}(r_2) \geq \frac{\alpha}{(1+\lambda)^2} |W|^2,
\]
which implies $\tilde{\Delta}^{\alpha/(1+\lambda)^2}_{T,W} 
\leq \Delta^{\alpha}_{T,W}$.
The righmost inequality is proved in a simmetrical fashion. 
Again, from Lemma~\ref{thm:bettercoreset}
we have that for every $r \in T$, $w(r) \geq \tilde{w}(r)$. Hence,
for any $d$ such that
\[
\sum_{\stackrel{r_1,r_2 \in T:}{\footnotesize \distn(r_1,r_2) \leq d}} 
\tilde{w}(r_1)\tilde{w}(r_2)\geq \alpha |W|^2,
\]
we have that
\[
\sum_{\stackrel{r_1,r_2 \in T:}{\footnotesize \distn(r_1,r_2) \leq d}} 
w(r_1)w(r_2) \geq \alpha |W|^2,
\]
which implies $\Delta^{\alpha}_{T,W} \leq \tilde{\Delta}^{\alpha}_{T,W}$. 
The bounds on the working memory and on the update time follow directly
from Theorem~\ref{thm:pppout_apprx} and from the choice of $k=1$ and $z=0$.
Finally, the estimates $\tilde{\Delta}^{\alpha/(1+\lambda)^2}_{T,W}$
and $\tilde{\Delta}^{\alpha}_{T,W}$ can be computed from $T$ in
time $\BO{|T|^2} =  \BO{(c'/(\epsilon \eta))^{2D}}$, using a simple 
strategy based on binary search.
\end{proof}

The theorem implies that by setting $\epsilon$ and $\lambda$
sufficiently small, we can get tight estimates for
$\Delta^{\alpha}_W$, for all windows for which the value of the
$\alpha$-effective diameter behaves smoothly in an interval to the
left of $\alpha$.  Finally, we remark that the algorithm is fully
oblivious to $D$, $\mindist$, and $\maxdist$. Moreover, while the
theoretical space bound exhibits a dependency on $(1/\eta)^D$, in the
next section we provide experimental evidence of a much lesser impact
of $\eta$ for datasets where outliers represent true noise, proving
that the working space requirements exhibit a milder dependence on the
crudeness of the lower bound $\eta$.


\section{Experiments} \label{sec:experiments}

We implemented the algorithms for $k$-center with $z$ outliers and for
the estimation of the effective diameter presented in
Sections~\ref{sec:algorithm} and~\ref{sec:effdiameter}.  For what
concerns $k$-center with $z$ outliers, we implemented the
dimensionality-sensitive algorithm of Subsection~\ref{sec:ddimension},
which offers a wider spectrum of performance-accuracy tradeoffs.
We ran proof-of-concept experiments aimed at testing the algorithms'
behavior against relevant competitors, in terms of approximation,
memory usage, and running time for processing each point arrival
(\emph{update time}) and for computing a solution for the current
window, whenever needed (\emph{query time}).  All tests were executed
using Java 13 on a Windows machine running on an AMD FX8320 processor
with 12GB of RAM, with the running times measured using
\texttt{System.nanoTime}, and feeding the points to the algorithms
through the file input stream.

\subsection{$k$-center with outliers}

Along with our algorithm (dubbed \coresetOutliers) we implemented
\footnote{The source code and the datasets used in our experiments
are provided on GitHub at \texttt{github.com/PaoloPellizzoni/OutliersSlidingWindows}}
the sequential $3$-approximation by \cite{Charikar2001}
(dubbed \charikar), to be run on the entire window $W$, consisting
of a search for the minimum $\rho$ such that
$\proc{outliersCluster}(W, k, \rho, 0)$, run with unit weights, ends with
at most $z$ uncovered points. We chose this sequential benchmark over the
existing LP-based 2-approximation algorithms, since these latter algorithms do not seem to admit
practical implementations.  Since \charikar\ itself is plagued by a
superquadratic complexity, which makes it
unfeasible for larger windows, we also devised a sampled version
(dubbed \sampledCharikar) where the center selection in each call to
$\proc{outliersCluster}$ examines only a fixed number of random candidates,
rather than all window points. We deemed unnecessary to perform
a comparison of our algorithm with the one \cite{DeBergMZ21}, 
since, as mentioned in the introduction, this latter algorithm needs to run an instance of 
$\proc{outliersCluster}$ for each update operation, and would thus prove to be 
a poor competitor of our strategy, where the execution 
of this expensive sequential procedure is confined only to the query operation.

Also, to asses the importance of using
a specialized algorithm to handle outliers, we compared the quality of
our solution against the one returned by the algorithm
of  \cite{GONZALEZ1985293} for $k$-center without outliers (dubbed \gon),
where the radius is computed excluding the $z$ largest distances from
the centers.

The algorithms were tested on the following datasets, often used in
previous works \cite{MalkomesKCWM15}: the \texttt{Higgs}
dataset\footnote{http://archive.ics.uci.edu/ml/datasets/HIGGS}, which
contains 11 million 7-dimensional points representing
high-energy particle features generated through Monte-Carlo
simulations; and the \texttt{Cover}
dataset\footnote{https://archive.ics.uci.edu/ml/datasets/covertype},
which contains $581012$ 55-dimensional points from geological
observations of US forest biomes, and was employed as a stress test
for our dimensionality-sensitive algorithm.  We also generated inflated versions of the original 
datasets, dubbed \texttt{Higgs+} and \texttt{Cover+}, by artificially
injecting a new true outlier point after each original point with
probability $p$, where the new point has norm 100 times the diameter
of the original dataset (e.g. as if produced by a malfunctioning
sensor). The probability $p$ was chosen to yield $z/2$ true outliers per window,
in expectation.
We performed tests for $k=10$, $z=10, 50$, and window sizes
$|W|=N \in \{10^4,10^5, 10^6\}$, using Euclidean distance. 

For \coresetOutliers, we set
$\delta = 2/3$, $\beta = 0.5$ and $\lambda = 0.5$; moreover, we set $\mindist = 0.01$ and 
$\maxdist =10^4$, which are conservative lower and upper estimates of the
clustering radii for all windows and all datasets.  
The implementations of $\charikar$ and $\sampledCharikar$
execute, for a window $W$, a search for a minimum $\rho$ such that
$\proc{outliersCluster}(W, k, \rho, 0)$ with unit weights, ends with
$\leq z$ uncovered points.  For $\sampledCharikar$,
$\proc{outliersClusters}$ has been modified so that each new center is
selected among a set of random window points of expected size 1000.

Tables~\ref{suppl-tab-1} and \ref{suppl-tab-2} detail the full results
of the experiments on $k$-center clustering with $z$ outliers.  All
quantities are provided as one-sigma confidence intervals, based on 10
windows sampled every $10^4$ timesteps after the first $N$ insertions.
Starred results are based on a single sample due to the excessively
high running time.  Table~\ref{suppl-tab-1} reports the average ratio
between the clustering radius obtained by each tested algorithm and
the one obtained by \coresetOutliers, as well as the average number of
floats maintained in memory by the algorithms. (All radii have been
computed with respect to the entire window, excluding the $z$ largest
distances from the centers.)  As shown in the table,
\coresetOutliers\ is always within a few percentage points from the
radius of the solution of \charikar\ (which did not finish in
reasonable time for $N=10^6$). On the other hand, the quality of the
solution of \sampledCharikar\ degrades as the window size grows, since
the fraction of center candidates decreases.  Moreover, as expected,
\gon\ yields poorer performance especially in presence of true
outliers, as these are mistakenly selected as centers instead of being
disregarded.  Hence, \gon\ is not considered in the successive
experiments.  Figure \ref{fig-memory} plots the memory usage (in
floats) of the algorithms for \texttt{Higgs}, confirming that the
working memory required by \coresetOutliers\ grows sublinearly with
$N$, and it is much smaller than the one required by \charikar\ and
\sampledCharikar, which is linear in $N$.

\begin{table*}
\centering
\scriptsize
\begin{tabular}{ c c  c c c  c  c c c}
 \hline 
 \multirow{3}{*}{Dataset} & \multirow{3}{*}{Algorithm} & \multicolumn{3}{c}{Obj. ratio} & \null & \multicolumn{3}{c}{Memory ($\times 10^6$ floats)} \\
 \cline{3-5} \cline{7-9}
 \null & \null & \multicolumn{3}{c}{Window size} & \null & \multicolumn{3}{c}{Window size}     \\
 \null & \null & $10^4$ & $10^5$ & $10^6$ &  & $10^4$ & $10^5$ & $10^6$   \\
 \hline  \hline
 \multirow{4}{1cm}{\textsc{Higgs} (z=10)} & \coresetOutliers  & $1 \pm 0$ & $1 \pm 0$ & $1 \pm 0$ & & $0.13 \pm 0.01$ & $0.27 \pm 0.02$ & $0.42 \pm 0.02$  \\
 \null                           & \charikar                  & $1.02 \pm 0.05$ & $0.99 \pm 0.04$ & -- & & $0.07 \pm 0$ & $0.7 \pm 0$ & $7 \pm 0$ \\
 \null                           & \sampledCharikar           & $1.07 \pm 0.05$ & $1.43 \pm 0.23$ & $2.74 \pm 0.7$ & & $0.07 \pm 0$ & $0.7 \pm 0$ & $7 \pm 0$ \\
 \null                           & \gon                       & $1.18 \pm 0.12$ & $1.17 \pm 0.08$ & $1.05 \pm 0.03$ & & $0.07 \pm 0$ & $0.7 \pm 0$ & $7 \pm 0$ \\
 \hline
 \multirow{4}{1cm}{\textsc{Higgs} (z=50)} & \coresetOutliers  & $1 \pm 0$ & $1 \pm 0$ & $1 \pm 0$ & & $0.26 \pm 0.01$ & $0.65 \pm 0.03$ & $1.25 \pm 0.02$ \\
 \null                           & \charikar                  & $1.02 \pm 0.04$ & -- & -- & & $0.07 \pm 0$ & $0.7 \pm 0$ & $7 \pm 0$ \\
 \null                           & \sampledCharikar           & $1.04 \pm 0.06$ & $1.14 \pm 0.08$ & $1.59 \pm 0.15$ & & $0.07 \pm 0$ & $0.7 \pm 0$ & $7 \pm 0$ \\
 \null                           & \gon                       & $1.54 \pm 0.19$ & $1.5 \pm 0.15$ & $1.19 \pm 0.06$ & & $0.07 \pm 0$ & $0.7 \pm 0$ & $7 \pm 0$ \\
 \hline
 \multirow{4}{1cm}{\textsc{Cover} (z=10)} & \coresetOutliers  & $1 \pm 0$ & $1 \pm 0$ &   & & $0.72 \pm 0.18$ & $2.06 \pm 0.13$ & \\
 \null                           & \charikar                  & $0.97 \pm 0.17$ & $1.02^*$ &   & & $0.55 \pm 0$ & $5.5 \pm 0$ &   \\
 \null                           & \sampledCharikar           & $0.96 \pm 0.17$ & $0.98 \pm 0.18$ &   & & $0.55 \pm 0$ & $5.5 \pm 0$ &   \\
 \null                           & \gon                       & $1.1 \pm 0.17$ & $0.94 \pm 0.15$ &   & & $0.55 \pm 0$ & $5.5 \pm 0$ &   \\
 \hline
 \multirow{4}{1cm}{\textsc{Cover} (z=50)} & \coresetOutliers  & $1 \pm 0$ & $1 \pm 0$ &   & & $1.83 \pm 0.22$ & $5.38 \pm 0.22$ &   \\
 \null                           & \charikar                  & $0.98 \pm 0.09$ & -- &   & & $0.55 \pm 0$ & $5.5 \pm 0$ &   \\
 \null                           & \sampledCharikar           & $0.99 \pm 0.1$ & $1.04 \pm 0.2$ &   & & $0.55 \pm 0$ & $5.5 \pm 0$ &   \\
 \null                           & \gon                       & $1.13 \pm 0.11$ & $1.02 \pm 0.17$ &   & & $0.55 \pm 0$ & $5.5 \pm 0$ &   \\
 \hline
 \multirow{4}{1cm}{\textsc{Higgs+} (z=10)} & \coresetOutliers & $1 \pm 0$ & $1 \pm 0$ & $1 \pm 0$ & & $0.12 \pm 0.01$ & $0.26 \pm 0.02$ & $0.43 \pm 0.02$ \\
 \null                           & \charikar                  & $0.98 \pm 0.18$ & $0.97 \pm 0.04$ & -- & & $0.07 \pm 0$ & $0.7 \pm 0$ & $7 \pm 0$ \\
 \null                           & \sampledCharikar           & $1.09 \pm 0.17$ & $1.61 \pm 0.2$ & $3.03 \pm 0.51$ & & $0.07 \pm 0$ & $0.7 \pm 0$ & $7 \pm 0$ \\
 \null                           & \gon                       & $1.63 \pm 0.29$ & $1.3 \pm 0.16$ & $1.4 \pm 0.02$ & & $0.07 \pm 0$ & $0.7 \pm 0$ & $7 \pm 0$ \\
 \hline
 \multirow{4}{1cm}{\textsc{Cover+} (z=10)} & \coresetOutliers & $1 \pm 0$ & $1 \pm 0$ &   & & $0.65 \pm 0.17$ & $1.93 \pm 0.21$ &   \\
 \null                           & \charikar                  & $0.99 \pm 0.15$ & $1.02 ^*$ &   & & $0.55 \pm 0$ & $5.5 \pm 0$ &   \\
 \null                           & \sampledCharikar           & $0.97 \pm 0.14$ & $0.96 \pm 0.15$ &   & & $0.55 \pm 0$ & $5.5 \pm 0$ &   \\
 \null                           & \gon                       & $3.07 \pm 2.02$ & $1.44 \pm 0.22$ &   & & $0.55 \pm 0$ & $5.5 \pm 0$ &   \\
 \hline
 
 \normalsize
\end{tabular}
\caption{Comparion between clustering radii and between working memory requirements} \label{suppl-tab-1}
\end{table*}

\null
\begin{table*}
\centering
\scriptsize
\begin{tabular}{ c c  c c c  c  c c c}
 \hline 
 \multirow{3}{*}{Dataset} & \multirow{3}{*}{Algorithm} & \multicolumn{3}{c}{Update time (ms)} & \null & \multicolumn{3}{c}{Query time (s)} \\
 \cline{3-5} \cline{7-9}
 \null & \null & \multicolumn{3}{c}{Window size} & \null & \multicolumn{3}{c}{Window size}     \\
 \null & \null & $10^4$ & $10^5$ & $10^6$ &  & $10^4$ & $10^5$ & $10^6$   \\
 \hline  \hline
 \multirow{4}{1cm}{\textsc{Higgs} (z=10)} & \coresetOutliers  & $0.66 \pm 0.17$ & $1 \pm 0.44$ & $1.98 \pm 0.7$ & &  $2.52 \pm 1.22$ & $5.5 \pm 0.87$ & $8.2 \pm 0.42$ \\
 \null                           & \charikar                  & $0 \pm 0$ & $0 \pm 0$ & $0 \pm 0$ & & $24.88 \pm 2.52$ & $3080.8 \pm 160.8$ & -- \\
 \null                           & \sampledCharikar           & $0 \pm 0$ & $0 \pm 0$ & $0 \pm 0$ & & $2.51 \pm 0.26$ & $31.5 \pm 1.34$ & $297.8 \pm 30.9$ \\
 \hline
 \multirow{4}{1cm}{\textsc{Higgs} (z=50)} & \coresetOutliers  & $1.24 \pm 0.66$ & $4.03 \pm 0.53$ & $5.39 \pm 1.14$ & & $10.51 \pm 0.76$ & $93.22 \pm 24.76$ & $75.99 \pm 10.38$ \\
 \null                           & \charikar                  & $0 \pm 0$ & $0 \pm 0$ & $0 \pm 0$ & & $27.97 \pm 2.68$ & -- & -- \\
 \null                           & \sampledCharikar           & $0 \pm 0$ & $0 \pm 0$ & $0 \pm 0$ & & $2.87 \pm 0.24$ & $47.95 \pm 2.11$ & $384.4 \pm 90.92$ \\
 \hline
 \multirow{4}{1cm}{\textsc{Cover} (z=10)} & \coresetOutliers  & $1.05 \pm 0.46$ & $3.74 \pm 0.57$ &   & & $12.85 \pm 2.59$ & $87.82 \pm 46.08$ &   \\
 \null                           & \charikar                  & $0 \pm 0$ & $0 \pm 0$ &   & & $314.93 \pm 55.21$ & $5\cdot10^4 \ ^*$ &   \\
 \null                           & \sampledCharikar           & $0 \pm 0$ & $0 \pm 0$ &   & & $32.1 \pm 5.25$ & $400.81 \pm 66.89$ &   \\
 \hline
 \multirow{4}{1cm}{\textsc{Cover} (z=50)} & \coresetOutliers  & $3.31 \pm 0.8$ & $11.22 \pm 1.28$ &   & & $44.06 \pm 6.88$ & $892.82 \pm 654.3$ &   \\
 \null                           & \charikar                  & $0 \pm 0$ & $0 \pm 0$ &   & & $310.87 \pm 43.6$ & -- &   \\
 \null                           & \sampledCharikar           & $0 \pm 0$ & $0 \pm 0$ &   & & $31.06 \pm 3.05$ & $431.18 \pm 28.42$ &   \\
 \hline
 \multirow{4}{1cm}{\textsc{Higgs+} (z=10)} & \coresetOutliers & $0.65 \pm 0.26$ & $1.22 \pm 0.37$ & $1.89 \pm 0.42$ & & $0.92 \pm 0.93$ & $5.28 \pm 1.82$ & $8.37 \pm 0.31$ \\
 \null                           & \charikar                  & $0 \pm 0$ & $0 \pm 0$ & $0 \pm 0$ & & $27.53 \pm 3.53$ & $3224.2 \pm 252.4$ & -- \\
 \null                           & \sampledCharikar           & $0 \pm 0$ & $0 \pm 0$ & $0 \pm 0$ & & $2.7 \pm 0.22$ & $35.34 \pm 3.14$ & $328.1 \pm 36.08$ \\
 \hline
 \multirow{4}{1cm}{\textsc{Cover+} (z=10)} & \coresetOutliers & $1.21 \pm 0.5$ & $3.14 \pm 0.6$ &   & & $7.08 \pm 6.54$ & $51.22 \pm 39.68$ &   \\
 \null                           & \charikar                  & $0 \pm 0$ & $0 \pm 0$ &   & & $402.32 \pm 71.95$ & $4\cdot10^4 \ ^*$ &   \\
 \null                           & \sampledCharikar           & $0 \pm 0$ & $0 \pm 0$ &   & & $41.47 \pm 9.44$ & $418.03 \pm 47.58$ &   \\
 \hline
 
 \normalsize
\end{tabular}
\caption{Comparison between update times and between query times} \label{suppl-tab-2}
\end{table*}

\begin{figure}[h]
    \centering
    \includegraphics[width=0.5\textwidth]{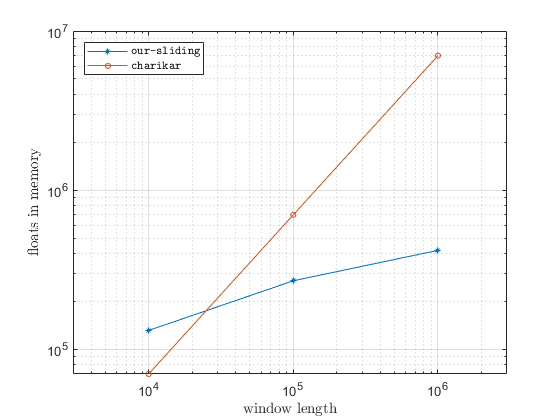}
    \caption{Working memory (\texttt{Higgs}, $z=10$)} \label{fig-memory}
\end{figure}

Table~\ref{suppl-tab-2} reports update times (in milliseconds) and
query times (in seconds).  For \coresetOutliers, the update time is
the time required to process any newly arrived point, while the query
time includes the time to extract the coreset and compute the final
solution on the coreset.  For \charikar\ and \sampledCharikar, the
update time is null, while the query time is the time taken to extract
the solution from the whole window.  The running times reveal that, by
virtue of the coreset-based approach, \coresetOutliers\ features a
query time much smaller than the one of \charikar\ and
\sampledCharikar.  Update times for \coresetOutliers, although not
negligible, are three orders of magnitude smaller than the query
times.  The experiments for $z=50$ show that while the working memory
requirements and running times increase with $z$, for reasonably large
values of $N$, \coresetOutliers\ still exhibits a much lower memory
footprint than \charikar and still returns solutions of comparable
quality.

We also tested the sensitivity of our algorithm's performance to the
$\lambda$ parameter, making it range in $[0,1]$.  The results for
\texttt{Higgs} are shown in Figure~\ref{fig-lambda} and in
Tables~\ref{suppl-tab-3} and \ref{suppl-tab-4}.  Specifically,
Table~\ref{suppl-tab-3} reports on the sensitivity of the clustering
radius and of the memory requirements, while Table~\ref{suppl-tab-4}
reports on the sensitivity of the update and query times.  The
experiments were run on \texttt{Higgs} and \texttt{Cover}, with
$k=z=10$, setting, as before, $\delta = 2/3$, $\beta = 0.5$, $\mindist
= 0.01$, $\maxdist = 10^4$.  For $\lambda$ we used the values: $0,
0.1, 0.5, 1$.  As shown in Figure \ref{fig-lambda}, setting $\lambda =
0$ (i.e., maintaining the full histograms) leads to an unbearable
increase in memory usage, hence in execution times. With approximate
histograms (i.e., $\lambda >0$), the memory usage decreases as
$\lambda$ increases, with a significant drop already for $\lambda =
0.1$. While $\lambda =0$ yields the solution with best approximation,
in our tests we often obtained the same solution using $\lambda =
0.1$. Most importantly, the degradation of the clustering radius never
  exceeded $1\%$, even for $\lambda = 1$.

\begin{table*}
\centering
\scriptsize
\begin{tabular}{ c c  c c c  c  c c c}
 \hline 
 \multirow{3}{*}{Dataset} & \multirow{3}{*}{Algorithm} & \multicolumn{3}{c}{Clustering radius} & \null & \multicolumn{3}{c}{Memory ($\times 10^6$ floats)} \\
 \cline{3-5} \cline{7-9}
 \null & \null & \multicolumn{3}{c}{Window size} & \null & \multicolumn{3}{c}{Window size}     \\
 \null & \null & $10^4$ & $10^5$ & $10^6$ &  & $10^4$ & $10^5$ & $10^6$   \\
 \hline  \hline
 \multirow{4}{1cm}{\textsc{Higgs} (z=10)} & $\lambda=0$  & $2.826 \pm 0.178$ & $4.289 \pm 0.162$ & -- & & $0.53 \pm 0.04$ & $4.85 \pm 0.96$ & -- \\
 \null                           & $\lambda=0.1$         & $2.826 \pm 0.178$ & $4.289 \pm 0.162$ & $5.981 \pm 0.124$ & & $0.15 \pm 0.01$ & $0.36 \pm 0.03$ & $0.6 \pm 0.02$ \\
 \null                           & $\lambda=0.5$         & $2.808 \pm 0.195$ & $4.297 \pm 0.165$ & $6.031 \pm 0.027$ & & $0.13 \pm 0.01$ & $0.27 \pm 0.02$ & $0.42 \pm 0$ \\
 \null                           & $\lambda=1$           & $2.812 \pm 0.178$ & $4.281 \pm 0.176$ & $6.031 \pm 0.021$ & & $0.12 \pm 0.01$ & $0.24 \pm 0.02$ & $0.37 \pm 0$ \\
 \hline
 \multirow{4}{1cm}{\textsc{Cover} (z=10)} & $\lambda=0$  & $1177.56 \pm 400.03$ & $1976.02 \pm 150.66$ &   & & $0.75 \pm 0.17$ & $2.3 \pm 0.15$ &   \\
 \null                           & $\lambda=0.1$         & $1177.56 \pm 400.03$ & $1973.31 \pm 149.53$ &   & & $0.74 \pm 0.17$ & $2.23 \pm 0.14$ &   \\
 \null                           & $\lambda=0.5$         & $1177.56 \pm 400.03$ & $1954.08 \pm 145.71$ &   & & $0.72 \pm 0.18$ & $2.06 \pm 0.13$ &   \\
 \null                           & $\lambda=1$           & $1180.44 \pm 405.86$ & $1967.02 \pm 150.61$ &   & & $0.71 \pm 0.18$ & $2 \pm 0.12$ &   \\
 \hline
\end{tabular}
\caption{Sensitivity of \coresetOutliers\ to $\lambda$: clustering radii and working memory requirements} \label{suppl-tab-3}
\end{table*}

\begin{table*}
\centering
\scriptsize
\begin{tabular}{ c c  c c c  c  c c c}
 \hline 
 \multirow{3}{*}{Dataset} & \multirow{3}{*}{Algorithm} & \multicolumn{3}{c}{Update time (ms)} & \null & \multicolumn{3}{c}{Query time (s)} \\
 \cline{3-5} \cline{7-9}
 \null & \null & \multicolumn{3}{c}{Window size} & \null & \multicolumn{3}{c}{Window size}     \\
 \null & \null & $10^4$ & $10^5$ & $10^6$ &  & $10^4$ & $10^5$ & $10^6$   \\
 \hline  \hline
 \multirow{4}{1cm}{\textsc{Higgs} (z=10)} & $\lambda=0$  & $3.49 \pm 0.69$ & $35.18 \pm 21.11$ & -- & & $2.49 \pm 1.16$ & $5.91 \pm 0.74$ & -- \\
 \null                           & $\lambda=0.1$         & $0.72 \pm 0.38$ & $1.56 \pm 0.53$ & $2.58 \pm 0.41$ & & $2.47 \pm 1.18$ & $5.88 \pm 0.89$ & $8.88 \pm 0.47$ \\
 \null                           & $\lambda=0.2$         & $0.57 \pm 0.12$ & $1.39 \pm 0.42$ & $2.06 \pm 0.67$ & & $2.46 \pm 1.19$ & $5.82 \pm 0.91$ & $7.74 \pm 0.25$ \\
 \null                           & $\lambda=1$           & $0.57 \pm 0.1$ & $1.32 \pm 0.36$ & $1.99 \pm 0.32$ & & $2.47 \pm 1.2$ & $5.81 \pm 0.92$ & $7.77 \pm 0.15$ \\
 \hline
 \multirow{4}{1cm}{\textsc{Cover} (z=10)} & $\lambda=0$  & $1.12 \pm 0.38$ & $3.75 \pm 0.32$ &   & & $12.44 \pm 2.38$ & $90.65 \pm 51.26$ &   \\
 \null                           & $\lambda=0.1$         & $1.33 \pm 0.69$ & $3.72 \pm 0.53$ &   & & $12.31 \pm 1.92$ & $90.36 \pm 51.44$ &   \\
 \null                           & $\lambda=0.2$         & $1.27 \pm 0.5$ & $3.57 \pm 0.25$ &   & & $12.41 \pm 2.19$ & $90 \pm 51.14$ &   \\
 \null                           & $\lambda=1$           & $1.21 \pm 0.5$ & $4.29 \pm 2.28$ &   & & $12.32 \pm 2.17$ & $90.34 \pm 50.61$ &   \\
 \hline
\end{tabular}
\caption{Sensitivity of \coresetOutliers\ to $\lambda$: update and query times} \label{suppl-tab-4}
\end{table*}

\begin{figure}[h]
    \centering
    \includegraphics[width=0.5\textwidth]{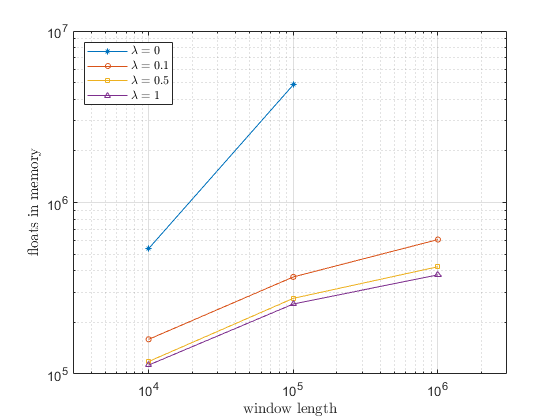}
    \caption{\coresetOutliers: sensitivity to $\lambda$ (\texttt{Higgs}, $z=10$)} \label{fig-lambda}
\end{figure}

Overall, the experiments confirm that \coresetOutliers\ is able to achieve precision comparable to the sequential algorithms at a fraction of their memory/time requirements.

\subsection{Effective diameter}

We compared our algorithm for estimating the effective diameter
described in Section \ref{sec:effdiameter} (dubbed \coresetEf) against
the following sequential baseline (dubbed \seqEf).  \seqEf\ computes
all $N^2$ distances in the window $W$ and, to avoid storing all of
them, only keeps track of  how many distances lay in each
interval $[\mindist\cdot(1+\rho)^i, \mindist\cdot(1+\rho)^{i+1}]$, for $i \geq
0$, by maintaining the appropriate counters. We set $\rho = 0.01$ so
that the error due to this discretization is minimal. After all
distances have been computed, the algorithm sweeps the counters and
returns the minimum value $\mindist\cdot(1+\rho)^i$ for which at least
$\lceil\alpha |W|^2\rceil$ distances fall below
that value.  This same procedure, adapted to account for weights, is also
used in \coresetEf\ to compute the solution on the weighted coreset.

\begin{table*}
\centering
\scriptsize
\begin{tabular}{ c c  c c c  c  c c c}
 \hline 
 \multirow{3}{*}{Dataset} & \multirow{3}{*}{Algorithm} & \multicolumn{3}{c}{Diameter ratio} & \null & \multicolumn{3}{c}{Memory ($\times 10^6$ floats)} \\
 \cline{3-5} \cline{7-9}
 \null & \null & \multicolumn{3}{c}{Window size} & \null & \multicolumn{3}{c}{Window size}     \\
 \null & \null & $10^4$ & $10^5$ & $10^6$ &  & $10^4$ & $10^5$ & $10^6$   \\
 \hline  \hline
 \multirow{2}{*}{\textsc{Higgs-eff}} & $\coresetEf$   & $1 \pm 0$ & $1 \pm 0$ & $1 \pm 0$ & & $0.52 \pm 0.36$ & $0.41 \pm 0.23$ & $0.53 \pm 0.1$ \\
 \null                           & $\seqEf$             & $0.991 \pm 0.019$ & $0.992 \pm 0.009$ & -- & & $0.07 \pm 0$ & $0.7 \pm 0$ & $7 \pm 0$ \\
 \hline
\end{tabular}
\caption{Effective diameter: comparison between  estimates and between working memory requirements} \label{suppl-tab-5}
\end{table*}

\begin{table*}
\centering
\scriptsize
\begin{tabular}{ c c  c c c  c  c c c}
 \hline 
 \multirow{3}{*}{Dataset} & \multirow{3}{*}{Algorithm} & \multicolumn{3}{c}{Update time (ms)} & \null & \multicolumn{3}{c}{Query time (s)} \\
 \cline{3-5} \cline{7-9}
 \null & \null & \multicolumn{3}{c}{Window size} & \null & \multicolumn{3}{c}{Window size}     \\
 \null & \null & $10^4$ & $10^5$ & $10^6$ &  & $10^4$ & $10^5$ & $10^6$   \\
 \hline  \hline
 \multirow{2}{*}{\textsc{Higgs-eff}} & $\coresetEf$   & $3.63 \pm 3.76$ & $2.53 \pm 1.57$ & $3.33 \pm 1.16$ & & $0.05 \pm 0.01$ & $0.77 \pm 0.16$ & $7.41 \pm 0.57$ \\
 \null                           & $\seqEf$             & $0 \pm 0$ & $0 \pm 0$ & $0 \pm 0$ & & $9.26 \pm 1.23$ & $994.81 \pm 38.63$ & -- \\
 \hline
\end{tabular}
\caption{Effective diameter: comparison between update times and between and query times} \label{suppl-tab-6}
\end{table*}

We experimented on the \texttt{Higgs-eff} dataset, which is another
artificially inflated version of the \texttt{Higgs} dataset where a
true outlier (i.e., a random point whose norm is 100 times the diameter of
the original dataset) is injected, on average, every 1000 points.  
In the experiment, we set $\alpha
=0.9$.  Since, for every tested window size $N$, $\alpha N^2 \ll
(N-N/1000)^2$, we expect that, in this controlled experiment, the
$\alpha$-effective diameter of each window of 
\texttt{Higgs-eff} to be close to the
diameter of the non-outlier points in the window. For our algorithm we
set $\epsilon = 5/3$ and $\lambda = \beta = 0.5$. Moreover, we set
$\mindist = 0.01$ and $\maxdist = 10^4$.  Finally, we set $\eta =
1/1000$, which is a very conservative lower bound to the ratio between
the effective diameter and the diameter of the dataset for any window
$W$.

The results of these experiments are reported in
Tables~\ref{suppl-tab-5} and
\ref{suppl-tab-6}. Table~\ref{suppl-tab-5} reports the ratio between
the effective diameter computed by \seqEf\ and the (conservative)
upper estimate $\tilde{\Delta}_{T,W}^{\alpha}$ computed by \coresetEf, as well
as the average number of floats maintained in memory by the
algorithms. As shown in the
table, the solution returned by \coresetEf\ is almost
indistinguishable from the one returned by \seqEf, for those window
sizes for which \seqEf, whose complexity grows quadratically, could be
executed within reasonable times.  On the other hand, the memory usage
of \coresetEf\ grows very slowly with $N$ and thus, for large enough
window sizes, becomes lower than the one of \seqEf. Table
\ref{suppl-tab-6} reports the update and query times for the two
algorithms. Due to the reduced coreset size, the query times of
\coresetEf\ are orders of magnitude lower than those of \seqEf, and
the updated times of \coresetEf, although not negligible, are
significantly smaller than query times.

Finally, we tested on tailor-made artificial datasets the impact of
the parameter $\eta$, which is a (possibly crude) lower bound on the
ratio between the effective diameter and the diameter. In fact, the
theoretical bounds on the coreset size embody a factor proportional to
$(1/\eta)^D$ which could lead to a severe deterioration of the
performance indicators for low (i.e., conservative) values of $\eta$. In
reality, for datasets where the discrepancy between diameter and
effective diameter is caused by few distant outliers (noisy points),
since the balls centered on coreset points have radius
$\BO{\epsilon\eta\Delta_W} = \BO{\epsilon\Delta^{\alpha}_W}$ and since
most of the points will be contained in a ball of radius
$\BO{\Delta^{\alpha}_W}$, with only a few outliers at distance
$\BO{\Delta_W}$, the actual number of points maintained in the coreset
should not really depend on $\eta$.  To test this intuition we created
artificial datasets, by generating random points in a ball of unit
radius with a few outliers (one every 1000 points, on average) on the
surface of a ball of radius $R$, for values of $R$ in $\{10, 100,
1000 \}$.  We set $\alpha = 0.9$ as before, and ran our algorithm with
$\epsilon = 10/3$ and $\lambda = 0.5$. Moreover, we set $\eta = 1/(2R)$
as it lower bounds the ratio between the effective diameter and the
diameter. We report the results for window size $N = 10^5$ since a
similar pattern emerges for other values of $N$.  Indeed, as
Table \ref{suppl-tab-7} show, the memory usage is in practice almost
constant across all values of $\eta$.

\begin{table*}
\centering
\scriptsize
\begin{tabular}{ c c  c c c }
 \hline 
 Dataset & Eff. Diameter & Memory ($\times 10^6$ floats) & Update time (ms) & Query time (s) \\
 \hline
 $R=10$ & $1.175 \pm 0$  & $0.22 \pm 0.05$ &  $0.91 \pm 0.31$ &  $1.69 \pm 0.5$   \\
 $R=100$ & $1.175 \pm 0.006$ & $0.24 \pm 0.11$ & $1.79 \pm 1.01$ & $0.84 \pm 0.25$ \\
 $R=1000$ & $1.178 \pm 0.006$ & $0.35 \pm 0.14$ & $1.9 \pm 1.39$ & $4.2 \pm 1.34$ \\
 \hline
\end{tabular}
\caption{Effective diameter: sensitivity to $\eta$} \label{suppl-tab-7}
\end{table*}


\section{Conclusions} \label{sec:conclusions}

In this paper, we have presented coreset-based streaming algorithms
for the $k$-center problem with $z$ outliers and for the estimation of
the $\alpha$-effective diameter under the sliding window setting. Our
algorithms require working memory considerably smaller than the window
size, and, with respect to the state-of-the-art sequential algorithms
executed on the entire window, they are up to orders of magnitude faster,
while achieving comparable accuracy. The effectivenes of our approach
has been confirmed by a set of proof-of-concept experiments on both
real-world and synthetic datasets.

Based on the theoretical analysis conducted in the paper, the space
and time required by our algorithms to attain high accuracy seem to grow steeply (in fact, exponentially) with the doubling dimension of
the stream. An interesting, yet challenging, research avenue is to
investigate whether this steep dependence can be ameliorated by means
of alternative techniques (e.g., the use of randomization).




\bibliographystyle{plain}
\bibliography{biblio.bib}

\end{document}